\algnewcommand\algorithmicreturn{\textbf{return}}
\algnewcommand\RETURN{\algorithmicreturn}
\algnewcommand\algorithmicprocedure{\textbf{Procedure:}}
\algnewcommand\PROCEDURE{\item[\algorithmicprocedure]}%
\algnewcommand\algorithmicendprocedure{\textbf{end procedure}}
\algnewcommand\ENDPROCEDURE{\item[\algorithmicendprocedure]}%
\algnewcommand{\algvar}[1]{{\text{\ttfamily\detokenize{#1}}}}
\algnewcommand{\algarg}[1]{{\text{\ttfamily\itshape\detokenize{#1}}}}
\algnewcommand{\algproc}[1]{{\text{\ttfamily\detokenize{#1}}}}
\algnewcommand{\algassign}{\leftarrow}
\newtheorem{theorem}{Theorem}
\newtheorem{property}{Property}
\newtheorem{lemma}{Lemma}
\newcommand{\bee}{\textsc{Bee Search}}
\newcommand{\bus}{\textsc{BUS}}
\newcommand{\bustle}{\textsc{Bustle}}
\newcommand{\probe}{\textsc{Probe}}
\newcommand{\heapsearch}{\textsc{Heap Search}}
\newcommand{\brute}{\textsc{Brute}}
\newtheorem{example}{Example}
\begin{document}

\title{Program Synthesis with Best-First Bottom-Up Search}

\author{\name Saqib Ameen \email saqib.ameen@ualberta.ca \\
       \name Levi H. S. Lelis \email levi.lelis@ualberta.ca \\
       \addr Department of Computing Science, \\
       Alberta Machine Intelligence Institute (Amii),\\
       University of Alberta, Canada
       }

\maketitle

\begin{abstract} 
    Cost-guided bottom-up search (\bus) algorithms use a cost function to guide the search to solve program synthesis tasks. In this paper, we show that current state-of-the-art cost-guided BUS algorithms suffer from a common problem: they can lose useful information given by the model and fail to perform the search in a best-first order according to a cost function. We introduce a novel best-first bottom-up search algorithm, which we call \bee, that does not suffer information loss and is able to perform cost-guided bottom-up synthesis in a best-first manner. Importantly, \bee\ performs best-first search with respect to the \emph{generation} of programs, i.e., it does not even create in memory programs that are more expensive than the solution program. It attains best-first ordering with respect to generation by performing a search in an abstract space of program costs. We also introduce a new cost function that better uses the information provided by an existing cost model. Empirical results on string manipulation and bit-vector tasks show that \bee\ can outperform existing cost-guided BUS approaches when employing more complex domain-specific languages (DSLs); \bee\ and previous approaches perform equally well with simpler DSLs. Furthermore, our new cost function with \bee\ outperforms previous cost functions on string manipulation tasks.
\end{abstract}

\section{Introduction}

Synthesizing computer programs that satisfy a specification is a long-standing problem in Computing Science~\shortcite{1969:PROW,1979:SyntehsisDreamPrograms,1985:AutoSynthesis,1994:LogicProgramSynthesis,2004:ScehmaGuidedSynthesis,2006LezamaSketching,2012:NumberTransfromation} that has received much attention from the Artificial Intelligence~\shortcite{BalogGBNT16,robustfill17,kalyan2018neuralguided,Shin2019SyntheticDF,dreamcoder} and the Programming Language communities~\shortcite{AlbarghouthiGK13,Udupa:2013,Woosuk2018,probe,Ruyi20}. In this paper, we consider 
programming-by-example problems in which a system receives a set of input-output examples and it attempts to synthesize a program that maps each input to the desired output. 

One approach to solving program synthesis tasks is to search for a solution over the space of programs defined by a domain-specific language (DSL). The program space that DSLs induce can be very large, and a considerable amount of research has been devoted to developing more effective search algorithms to solve program synthesis tasks~\shortcite{bustle,probe,Woosuk2018,Alur2017ScalingEP,AlbarghouthiGK13}. Bottom-up search (\bus) is a successful search strategy that starts with the smallest possible DSL programs and iteratively generates larger programs by combining the smaller ones generated by the algorithm~\shortcite{AlbarghouthiGK13,Udupa:2013,sygus}. 

One of the key advantages of \bus\ over other search algorithms such as top-down search approaches~\cite{Woosuk2018,Alur2017ScalingEP} is that \bus\ generates complete programs during the search, which means that the programs can be executed and evaluated. The ability to execute programs allows one to discard observational equivalent programs~\cite{AlbarghouthiGK13}; two programs are observational equivalent if they produce the same output value for a given set of input values. The detection of observational equivalent programs can substantially reduce the number of programs generated during the search. 

However, due to the size of the search space, \bus\ is only able to find solutions to problems that can be solved with short programs. Cost-guided \bus\ algorithms are able to solve more problems than \bus\ because they use a cost function to guide the search toward more promising programs~\cite{tfcoder:2020}. A cost function receives a program and returns a cost value.  
Programs with low-cost values are deemed more promising than programs with high-cost values and are given preference to be used as subprograms of other programs, thus biasing the search. Several systems use cost-guided \bus\ algorithms: \textsc{TF-Coder}~\cite{tfcoder:2020}, \textsc{Probe}~\cite{probe}, \textsc{Bustle}~\cite{bustle}, and \textsc{Heap Search} ~\shortcite{Fijalkow2022ScalingNP}.
%

In this paper, we show that, despite their superior performance to \bus, the guided search algorithms used in \textsc{TF-Coder}, \textsc{Probe}, and \textsc{Bustle} suffer from the same problem: they can lose some of the information given by the cost function because they round off the costs of the programs. As a result, they do not necessarily perform the search in best-first order with respect to the cost of the programs and might evaluate more expensive programs before evaluating cheaper ones. \heapsearch\ is an existing cost-guided \bus\ algorithm that searches in best-first order with respect to a cost function. However, \heapsearch\ searches in a best-first order with respect to the evaluation of the programs. This means that it can generate a very large number of programs that are more expensive than the solution program. 
Moreover, as we show in this paper, \heapsearch\ sacrifices the detection of observational equivalent programs to attain its best-first ordering and it is only able to search in best-first order while using some of the cost functions from the literature.

In our work, we present a taxonomy for the cost functions used in previous work, where we divide them into two families of functions: pre-generation and post-generation. Pre-generation functions are those able to evaluate the cost of a program before the program is even created in memory. Post-generation functions require the program to be in memory so that the program can be executed as part of the computation of its cost. 

In addition to our taxonomy, another contribution of this paper is a best-first bottom-up search algorithm we call \bee, which overcomes the weaknesses of previous cost-guided \bus\ algorithms. \bee\ performs search in a best-first ordering according to cost functions from both the pre-generation and post-generation families of functions. 
Moreover, \bee's best-first search is with respect to the generation of programs. That is, \bee\ does not even create in memory programs that are more expensive than the solution program. Note that other best-first algorithms such as A*~\shortcite{hart1968aFormalBasis} and Dijkstra's algorithm~\cite{dijkstras} can generate states that are more costly than the goal state, which can hurt performance in domains with a large branching factor.  
\bee's generation-time best-first search is achieved by searching in an abstract cost-tuple space. Each state in the cost-tuple space informs which programs should be generated next in search, such that the best-first ordering of programs is attained. Unlike \heapsearch, \bee\ performs observational equivalence checks as regular \bus\ algorithms. 

To highlight \bee's ability to use a wide range of cost functions, we introduce a novel cost function based on the neural network model used in the \bustle\ system. In contrast to \bustle's cost function, our cost function ``relies'' more on the prediction of the neural model and less so on the size of the evaluated programs. 

We hypothesize that \bee\ is able to solve more problems than \probe\ and \bustle\ due to the information these algorithms lose during search. To evaluate our hypothesis, we compare the number of problems \bee\ solves while using the same cost functions \probe\ and \bustle\ used in a set of string manipulation tasks and in a set of bit-vector manipulation tasks. The results show that \bee\ is never worse than \probe\ and \bustle\, and it can solve more problems than them when searching in larger program spaces. We also evaluate \bee's generation-time best-first search by comparing it with \heapsearch\ and with a search algorithm based on the best-first search algorithm used in \brute, an Inductive Logic Programming system~\cite{brute}. Both \heapsearch\ and \brute\ perform best-first search, but not with respect to the generation of programs, as \bee\ does. \bee\ outperforms both \heapsearch\ and \brute\ by a large margin in all the settings evaluated. Finally, the results also show that \bee\ with our novel cost function outperforms all systems tested in the string manipulation domain.

This paper is organized as follows. We start by defining the program synthesis problem~(Section \ref{section:formulation}), then we present existing uninformed and cost-guided bottom-up search algorithms for synthesis and discuss the limitations of a few contemporary cost-guided \bus\ algorithms~(Sections \ref{sec:uninformedbus} and \ref{sec:informedbus}). In Section~\ref{section:cost-functions}, we discuss two cost functions and use them to describe the taxonomy of the cost functions used in the literature, then we present our bottom-up best-first search algorithm \bee{}~(Section \ref{section:bee-search}) and prove the guarantees it provides. In Section~\ref{section:results}, we present empirical results, followed by related work~(Section \ref{section:related-work}) and conclusions~(Section \ref{section:conclusions}). In Appendix
~\ref{appendix:DSL}, we present the DSLs we used.

\section{Problem Formulation}\label{section:formulation}

In program synthesis tasks, one is given a DSL in the form of a context-free grammar $\mathcal{G} = (V, \Sigma, R, I)$. Here, $V$, $\Sigma$, and $R$ are sets of non-terminals, terminals, and relations defining the production rules of the grammar, respectively. $I$ is $\mathcal{G}$'s initial symbol. Figure~\ref{fig:dsl} shows a DSL with $V = \{I\}$, $\Sigma = \{$\texttt{concat}, \texttt{1}, \texttt{2}, $\cdots$, \texttt{1000}$\}$, $R$ represents the production rules (e.g., $I \to$ \texttt{1}); we call non-terminal a production rule whose righthand side contains at least one non-terminal symbol and we call terminal a production rule whose righthand side does not contain a non-terminal symbol. The arity of a non-terminal rule is the number of non-terminal symbols on the rule's righthand side. For example, the arity of rule $I \to \texttt{concat}(I, I)$ is $2$. The arity of terminal rules is $0$. The programs $\mathcal{G}$ accepts determine the programs space. For example, $\mathcal{G}$ accepts \texttt{concat}(\texttt{concat}(\texttt{1}, \texttt{2}), \texttt{3}): $I$ is replaced with \texttt{concat}$(I, I)$; then the leftmost $I$ with \texttt{concat}$(I, I)$ and the rightmost $I$ with \texttt{3}, and so on. The DSL of this example treats all numbers as strings and \texttt{concat}(\texttt{concat}(\texttt{1}, \texttt{2}), \texttt{3}) returns \texttt{123}.

Search algorithms represent programs as abstract syntax trees (ASTs). Figure~\ref{fig:dsl} shows the AST of the program \texttt{concat}(\texttt{concat}(\texttt{1}, \texttt{2}), \texttt{3}). Each node in the AST represents a production rule. Nodes representing a non-terminal rule have a number of children equal to the number of non-terminal symbols in the rule. For example, \texttt{concat} has two children because the rule $I \rightarrow $ \texttt{concat}$(I, I)$ contains two symbols $I$. Nodes representing production rules of terminals are leaves in the AST. Note that each subtree in the AST represents a program. We call the subtrees rooted at a child of node $p$ the subprograms of $p$. For example, \texttt{concat}(\texttt{1}, \texttt{2}) and \texttt{3} are subprograms of the root node of the tree in Figure~\ref{fig:dsl}. 
We say that a program is generated in search when the program's AST is created and stored in memory. We say that a program is evaluated when it is executed. 


\begin{figure}[t!]
    \centering
    \begin{minipage}{0.24\textwidth}
    \begin{align*}
    I \to& \, \texttt{concat}(I, I) | \texttt{ 1 } | \texttt{ 2 } \\ & | \cdots | \texttt{ 1000 } 
    \end{align*}
    \end{minipage}
    \begin{minipage}{0.21\textwidth}
    \begin{tikzpicture}[font=\footnotesize,edge from parent/.style={draw}]
        \tikzstyle{node}=[draw,ellipse,minimum size=0.0cm];
        \tikzstyle{level 1}=[level distance=10mm,sibling distance=20mm]
        \tikzstyle{level 2}=[level distance=10mm,sibling distance=20mm]
        \tikzstyle{level 3}=[level distance=10mm,sibling distance=20mm]
        \tikzstyle{level 4}=[level distance=10mm,sibling distance=20mm]
      \node(0)[node]{\texttt{concat}} 
             child{node[node]{\texttt{concat}}
                child{node[node]{\texttt{1}}}
                child{node[node]{\texttt{2}}}
            }
            child{node[node]{\texttt{3}}};
    \end{tikzpicture}
    \end{minipage}
    \cprotect\caption{DSL and AST for \texttt{concat(concat(1, 2), 3)}, which produces the output \texttt{123}.}
    \label{fig:dsl}
    \end{figure}

In addition to a DSL, a program synthesis task is composed of a set of input values $\mathcal{I}$ and output values $\mathcal{O}$. The task is (i) to derive a program that $\mathcal{G}$ accepts and (ii) to correctly map each of the input values to its corresponding output value. For example, consider a DSL represented with a grammar $\mathcal{G}$ that is identical to the one shown in Figure~\ref{fig:dsl} but augmented with the rules $I \rightarrow \texttt{in}_1 | \texttt{in}_2$, where $\texttt{in}_1$ and $\texttt{in}_2$ are two input values. The program \texttt{concat}($\verb|in|_1, \verb|in|_2$) correctly produces the output value for the following problem: $\mathcal{I} = \{[\text{\texttt{1}, \texttt{2}}], [\text{\texttt{10}, \texttt{10}}]\}$  and $\mathcal{O} = \{[\texttt{12}], [\texttt{1010}]\}$. 



\section{Uninformed Bottom-Up Search (\bus)}
\label{sec:uninformedbus}
\begin{algorithm}[t]
    \caption{Uninformed Bottom-Up Search (\bus)}
    \begin{algorithmic}[1]
    \PROCEDURE \textsc{Uninformed-BUS($\mathcal{G}, (\mathcal{I}, \mathcal{O}$))}
    \REQUIRE $\mathcal{G} = (V, \Sigma, R, I)$, and a set of input-output examples ($\mathcal{I}, \mathcal{O}$).
    \ENSURE Solution program $p$ or $\perp$
    \STATE $s \gets 1$ \label{line:bus_s_i}
    \WHILE{not timeout}  \label{line:bus_main_loop_s}
    \FOR{$p$ in \textsc{Next-Program}$(\mathcal{G}, B, s)$}
    \STATE $o \gets $ \textsc{Execute}$(p, \mathcal{I})$ \label{line:bus_p_exec}
    \IF{$o$ equals $\mathcal{O}$}
    \STATE \textbf{return} $p$ 
    \ENDIF
    \IF{$p$ is not equivalent to any program in $B$} \label{line:ui_bus_eq_st}
    \STATE $B[s]$.add$(p)$     
    \ENDIF\label{line:ui_bus_eq_end}
    \ENDFOR
    \STATE $s \gets s + 1$ \label{line:bus_size_increment}
    \ENDWHILE \\\label{line:bus_main_loop_e}
    \RETURN{} $\perp$ 
    \PROCEDURE \textsc{Next-Program}$(\mathcal{G}, B, s)$
    \REQUIRE $\mathcal{G} = (V, \Sigma, R, I)$, programs bank $B$, and program size $s$.
    \ENSURE Program of size $s$
    \FOR{$r \in R$} \label{line:bus_pgen_start}
    \IF{arity$(r) = 0$ and $\text{size}(r) = s$}  \label{line:bus_p_a0_s}
    \STATE \textbf{yield} $r$ \label{line:bus_p_a0_e}
        \ELSIF{arity > $0$ and size($r$) < $s$} \label{line:bus_p_a>0_s}
    \FOR{$(p_1,\cdots,p_{k})$\textbf{ in }$B\times\cdots \times B$} \emph{\#operation over the values of the dictionary $B$}
    \IF{$\text{size}(r(p_1,\cdots,p_k)) = s$ and $(p_1,\cdots,p_k)$ is type-consistent with $r$}
    \STATE \textbf{yield} $r(p_1, \cdots, p_{k})$ \label{line:bus_pgen_ends}
    \ENDIF
    \ENDFOR
    \ENDIF
    \ENDFOR
    \end{algorithmic}
\label{alg:bus}
\end{algorithm}

\bus\ solves program synthesis tasks by enumerating all programs of size $i$ before enumerating programs of size $i+1$, where size is the number of nodes in the program's AST. \bus\ starts by generating all programs defined by the terminal symbols of the DSL (size 1). Then, it uses the programs of size 1 to generate programs of size 2 through the production rules of the DSL; then it uses the programs of size 1 and 2 to generate programs of size 3, and so on. The search stops when it generates a program that maps the inputs to the outputs or it times out. Instead of size, \bus\ can also be height-based, where the height of the program's AST is considered. Since it has been shown that size-based \bus\ is more effective than height-based \bus\ in the string manipulation domain~\cite{probe}, which we consider in this paper, we only consider the size-based version in our work and call it \bus.

\begin{example}
Consider an example where we need to synthesize a program that produces the output \texttt{100010001000} with the DSL shown in Figure~\ref{fig:dsl} (the input set is empty). The solution to this problem is \texttt{concat}(\texttt{concat}(\texttt{1000}, \texttt{1000}), \texttt{1000}). \bus\ first generates and evaluates all programs of size 1: $\{$\texttt{1}, \texttt{2}, $\cdots$, \texttt{1000}$\}$. Since none of these programs correctly generates the desired output, \bus\ generates the set of programs of size 2, which is empty. Next, \bus\ generates all programs of size 3: $\{$\texttt{concat}(\texttt{1}, \texttt{1}), $\cdots$, \texttt{concat}(\texttt{1000}, \texttt{1000})$\}$. This process stops when the solution is generated while \bus\ produces programs of size 5. 
\label{example:bus}
\end{example} 
The pseudocode for the uninformed bottom-up search is given in Algorithm \ref{alg:bus}. It receives a grammar $\mathcal{G}=(V,\Sigma, R,I)$, and set of input-output examples ($\mathcal{I}, \mathcal{O}$) and returns a program $p$ that is able to map the inputs to the outputs. A failure $\perp$ is returned if no solution program is found. \bus\ starts by initializing the size $s = 1$ (line \ref{line:bus_s_i}), then it enters the main loop, where, in each iteration, it calls \textsc{Next-Program} procedure to generate programs of size $s$. The variable $s$ is incremented by one for the next iteration (line \ref{line:bus_size_increment}). 

\textsc{Next-Program} receives the grammar $\mathcal{G}$, bank of programs $B$, which are indexed by the AST size of the programs, and the size of the target program $s$. \textsc{Next-Program} generates programs of size $s$ using the production rules of the grammar $r \in R$ (lines \ref{line:bus_pgen_start}-\ref{line:bus_pgen_ends}). \textsc{Next-Program} returns the production rule $r$ if it is terminal (lines \ref{line:bus_p_a0_s}-\ref{line:bus_p_a0_e}). Otherwise, if $r$'s arity is greater than $0$, it generates programs with production rule $r$ by taking the Cartesian product of all programs in the bank of programs $B$ such that the following constraints are satisfied: (i) 
$\text{size}(r(p_1,\cdots,p_k) = s$, where function size$(\cdot)$ returns the number of nodes in the program's AST, $k$ represents the arity of the production rule $r$, and (ii) 
$(p_1,\cdots,p_k)$ is type-consistent with rule $r$, i.e., the type of subprograms, $p_1, \cdots, p_k$ matches the type of the arguments required by $r$ (e.g., \texttt{concat} can only take arguments of the string type). 

Once a program $p$ is yielded to the main loop, \textsc{Uninformed-BUS} executes $p$ (line \ref{line:bus_p_exec}) and, if the output $o$ of $p$ matches the desired output $\mathcal{O}$, \textsc{Uninformed-BUS} returns the program $p$ as a solution to the problem. Otherwise, it checks for observational equivalence, i.e., whether the search has previously seen a program with the same output set $o$. If not, the search adds the program $p$ to the bank of programs $B$, indexed by the program size $s$. The search continues while it has not timed out and a solution program is not found.


\section{Guided Bottom-Up Search}\label{sec:informedbus}

\textsc{TF-Coder}~\cite{tfcoder:2020}, \textsc{Probe}~\cite{probe}, \heapsearch{}~\cite{Fijalkow2022ScalingNP}, and \textsc{Bustle}~\cite{bustle} use a cost function $w$ to guide the bottom-up search. The function $w$ these systems employ favors programs that are ``more likely'' to lead to a solution. For example, in the problem described above, a cost function could favor programs that produce outputs with digits \texttt{1} and \texttt{0} as they appear in the desired output. In this section, we explain existing cost functions and then explain \probe, \bustle, \heapsearch{}, and \brute{}, which are used as baselines in our experiments. Since \textsc{TF-Coder}'s cost function requires a manually crafted set of weights for each operation of the language, we did not consider it in our experiments.

\subsection{Cost Functions}\label{section:cost-functions}

We divide the cost functions from the literature into two types: \emph{pre-generation} and \emph{post-generation}. Pre-generation cost functions define the cost of a program $p$ based on the production rule used to generate $p$ and on the subprograms of $p$. For example, considering the DSL in Figure~\ref{fig:dsl}, a pre-generation function would determine the cost of the program \texttt{concat(1, 2)} as a function of the cost of the production rule $I \rightarrow \texttt{concat}(I, I)$ and of the subprograms \texttt{1} and \texttt{2}. The cost functions used in \textsc{TF-Coder}, \probe, and \heapsearch{} are pre-generation functions where the cost of a program $p$ is given by the sum of the cost of the production rule used to generate $p$ and the cost of $p$'s subprograms. We call these functions pre-generation because one can compute the cost of a program before generating the program. Post-generation functions determine the cost of a program $p$ while using information that requires the execution of $p$. The cost function used in \bustle\ is post-generation because it uses the output of $p$ to compute its cost. We call these functions post-generation because the AST of the program must be in memory to compute its cost. 

\subsubsection{\probe{} Cost Function ($w_\probe{}$)}\label{section:cost-function-probe}

\begin{figure}[t!]
    \centering
       \begin{tabular}{lll} 
       & $\mathbb{P}_r$ & Cost \\
       \cmidrule{2-3}
       $I \to \, \texttt{concat}(I, I)$  & $0.00005$ & $14.28771$ \\
		\hspace{0.1in} $| \texttt{ 1 } | \texttt{ 2 } | \cdots | \texttt{ 999 }$ & $0.00099984$ & $9.966013$  \\
\hspace{0.1in} $| \texttt{ 1000 }$ & $0.00110895$ & $9.816589$ \\
       \end{tabular}
  \caption{A probabilistic context-free grammar (PCFG) for string manipulation task with probability of each rule ($\mathbb{P}_r$) and its cost which is negative log of the probability (-$\log(\mathbb{P}_r)$).}
    \label{fig:pcfg}
\end{figure}

\probe{} uses a pre-generation cost function ($w_\probe{}$) based on a probabilistic context-free grammar (PCFG). The PCFG assigns a value to each production rule $r$ denoting the probability that $r$ is part of a solution. Consider the PCFG shown in Figure~\ref{fig:pcfg}, \textsc{Probe} transforms the probability of a rule $r$, denoted by $\mathbb{P}_r$, into cost by taking $-\log_2(\mathbb{P}_r)$. The cost of each rule is shown in the column ``Cost''. The cost of a program $p = r(p_1, \cdots, p_{k})$, denoted by $w(p)$, is given by the sum of the costs of its subprograms and the rule $r$ used to derive it:
\begin{equation}
            w(p) = w(r) + \sum_{i=1}^k{w(p_i)}
\label{eq:program_cost}
\end{equation}

\begin{example}
    Consider program $p = $ \texttt{concat}(\texttt{1}, \texttt{2}). The cost of the program is given as $w(p) = 14.28771 + 9.966013 + 9.966013 = 34.219736$ because the cost of \texttt{concat}, \texttt{1}, and \texttt{2} is -$\log(0.00005) = 14.28771$, -$\log(0.00099) = 9.96601$, and -$\log(0.001108) = 9.81658$ respectively. Similarly, the cost of $p = $ \texttt{concat}(\texttt{1000}, \texttt{1000}) is $w(p) = 14.28771 + 9.816589 + 9.816589 = 33.920888$. Furthermore, \probe{} rounds off the cost of the programs to the nearest integer. For example, the cost of $p = $ \texttt{concat}(\texttt{1}, \texttt{2}) would be $34$ as given by $w_\probe{}$. 
\end{example}

The cost function $w_{\probe}$ rounds off the cost value to the nearest integer because \probe{} enumerates the programs in increasing order of integer $w$ values: first it enumerates all the programs of cost $1$, then the ones with integer $w$-values of $2$ and so on, until a solution is found. 
\probe\ learns the PCFG while searching. 
It runs the search until a budget \textsc{Lim} is exhausted and uses the partial solutions encountered in this search to train the PCFG. A partial solution is a program $p$, which maps at least one input from the input set $\mathcal{I}$ to its corresponding output in the set $\mathcal{O}$. 
The budget \textsc{Lim} is defined as a constant $d$, which is defined manually, multiplied by the highest cost $l$ of a production rule in the current PCFG. 
\begin{equation*}
\centering
\textsc{Lim} = \emph{l} \times d, \;\; \text{where} \;\; \emph{l} = \max_{r \in R}(w(r)) \,.
\end{equation*}
After training the PCFG with partial solutions, the search is restarted with the updated PCFG. The parameter $d$ allows one to define how often the system trains the PCFG model and restarts the search; \citeauthor{probe} used $d=6$.  If the search cannot find a solution after restarting and no partial solution is found, then the budget is increased to $\textsc{Lim}_{i} = \textsc{Lim}_{i-1} + \emph{l} \times d$, where $\textsc{Lim}_{i-1}$ is the budget of the previous iteration. 

\probe's PCFG starts with a uniform probability distribution to all production rules, and it updates the probability distribution with partial solutions (programs) as follows. 
\probe\ selects a subset of partial solutions from all the partial solutions encountered in the current iteration that satisfy the following: (i) it is the first cheapest program according to the current cost model, (ii) satisfies a unique subset of input-output examples, and (iii) was not encountered in previous iterations. Then it updates the probability of all production rules $r \in R$, $\mathbb{P}(r)$ with
\begin{equation*}
\centering
\mathbb{P}(r) = \frac{{\mathbb{P}_u(r)}^{1-\textsc{Fit}}}{Z} \;\; \text{where} \;\; \textsc{Fit} = \mathop{\max_{\{ p \in PSol | r \in tr(p) \}}} \frac{\vert o(p) \cap \mathcal{O}  \vert}{\vert \mathcal{O} \vert}
\end{equation*}

Here, $\mathbb{P}_u(r)$ represents the probability of rules as given by the uniform distribution, $Z$ represents the normalization factor, $PSol$ indicates a subset of partial solutions selected using the aforementioned criteria, $tr(p)$ represents the trace of a program, which is the sequence of production rules used to derive the program $p$, $o(p)$ indicates the output of the partial program $p$, and \textsc{Fit} indicates the highest proportion of input-output examples solved by any partial solution $p \in PSol$ derived using production rule $r$. This way, \probe\ increases the probability of production rules $r$ that solve the maximum number of input-output examples. Once \probe{} updates the PCFG, it stores $PSol$ in memory and maintains it across the restarts to ensure that partial solutions selected in previous iterations are not selected again to update the grammar. Therefore, PCFG is only updated when a set $PSol$ with novel partial solutions is found. 
Due to \textsc{Probe}'s update rule, the probabilities are in the open interval $(0.0, 1.0)$. This way, the model is never ``certain'' that a symbol must or must not be used in the solution of a problem; no symbol in the language costs $0$ and $w$ is monotonically increasing.

\subsubsection{\bustle{} Cost Function ($w_\bustle{}$)}\label{section:bustle-cost-function}

\bustle's cost function, $w_\bustle{}$, uses a neural network to compute the probability that a program is part of a solution. The network is a binary classification model that receives the input-output pairs $(\mathcal{I},\mathcal{O})$ of the task and the output of a program $p$ to each of the input values in $\mathcal{I}$ and returns the probability that $p$ is a subprogram of a solution to the task. 

\bustle's cost function is defined using two functions: $w$ and $w'$. For program $p=r(p_1,\cdots, p_k)$ the function $w(p)$ is defined as 
\begin{equation*}
            w(p) = 1 + \sum_{i=1}^k{w'(p_i)} \,.
\end{equation*}
Here, $1$ is the cost of production rule $w(r)$ used to generate $p$ (\bustle\ assumes all operators to cost $1$), and $w'(p_i)$ is the cost of the subprogram $p_i$ as given by the following equation.
\begin{equation}
   \label{eq:reweight}
   \textit{w}'(p) = \textit{w}(p) + 5 - \delta(p) \quad \text{where} \; \delta(p) \in \{0, \cdots, 5\}
\end{equation} 
The value of $\delta(p)$ is an integer value that is based on the probability of $p$ being a subprogram of a solution that the neural model returns. The integer value $\delta(p)$ returns is defined according to a binning scheme. Consider the values $\{0.0, 0.1, 0.2, 0.3, 0.4, 0.6, 1.0\}$; if the probability that the neural model returns is within the first two values, i.e., $[0.0, 0.1)$, then $\delta(p) = 0$, if it is within the second and third values, then $\delta(p) = 1$, and so on. The value of $\delta$ is used to penalize $p$ by changing its cost according to the probability given by the neural network; lower probabilities will result in higher costs. For example, consider $p$ with probability $0.05$, then $\delta(p) = 0$, and $w'(p) = w(p) + 5$. This will delay the use of the subprogram $p$ to generate further programs. Note that similarly to \probe{}, \textsc{Bustle} uses a discretization scheme to ensure that the cost values $w$ and $w'$ are integers. $w_\bustle$ also increases monotonically.

\begin{example}
Consider the generation of \texttt{concat}(\texttt{concat}(\texttt{1}, \texttt{3}), \texttt{2}). \textsc{Bustle} computes its $w$-value as $1 + w'\text{(\texttt{concat}(\texttt{1}, \texttt{3}))} + w'\texttt{(2)}$, i.e., the cost of $1$ for the operator \texttt{concat} plus the $w'$-costs of its subprograms \texttt{concat}(\texttt{1}, \texttt{3}) and \texttt{2}. Once the program \texttt{concat}(\texttt{concat}(\texttt{1}, \texttt{3}), \texttt{2}) is generated, a neural network is used to calculate its $w'$-value, which is used when it appears as a subprogram in another program. 
\end{example}

\paragraph{Property Signatures}
The binary classification model that defines $w_{\bustle}$ receives as input the set of input-output pairs, which could be of varied length. Instead of training a recurrent model to handle inputs of varied size, $w_{\bustle}$ uses property signatures~\cite{Odena2020Learning} to define the input to a simpler fully connected feed-forward neural model. A property is defined as a function $f$ that takes as input the input-output pair of a program $p$ and returns a Boolean: $f(i, o) \rightarrow \{0, 1\}$. A property is used to define some aspect of $p$. For example, given an input-output pair (\texttt{hello world}, \texttt{hello}), a property function $f(i, o)$ that checks whether $o$ is in $i$ returns True to the input-output pair. Similarly, when a list of input-output pairs of a program $p$ is evaluated with a list of $k$ properties, we get a feature vector of length $k$, where each entry indicates the result of a property for all input-output pairs. Each entry of this vector has a value in $\{-1, 0, 1\}$, where the values of $-1$ and $1$ indicate that the property returned either False or True, respectively, to all input-output pairs; the value of $0$ indicates that the property returned True to some pairs and False to others.

\begin{example}
Consider the set of input-output pairs and three properties, written in Python, shown in Figure~\ref{fig:ps_example}. The first lambda function returns True and the third False to all pairs. The second property returns True to the first and third pairs and False to the second. Thus, this set of input-output pairs have the property signature vector $[1,0,-1]$.
\end{example}

\begin{figure}
    \centering
    \begin{python}
    io_pairs = [("hello world", "hello"), 
                ("FOO BAR", "foo"), 
                ("switch", "switch")]
    ps = [lambda inp, out: out.lower() in inp.lower(), 
          lambda inp, out: out in inp,
          lambda inp, out: len(inp) < len(out)]
    \end{python}
    \caption{A set of input output pairs (\texttt{io\_pairs}) and property signature list (\texttt{ps}). The first property returns True for all pairs; 
    the second property returns True for the first and third pairs and False for the second pair; and the third property returns False for all pairs.
    }
    \label{fig:ps_example}
\end{figure}

The model $w_{\bustle}$ receives an input of size fixed by a number of properties. That way, the number of input-output pairs can vary, but the input size remains the same. 

\subsubsection{Cost Functions for \bee}

In this section, we show how we adapt $w_{\probe}$ and $w_{\bustle}$ to \bee. We also introduce a novel cost function to be used with \bee, which is based on $w_{\bustle}$. 

The difference between \textsc{Probe}'s $w$ and \textsc{Bee Search}'s version of it is that in the latter the costs are not rounded off. We denote both versions of the cost function by $w_{\probe}$. If used in the context of \bee, then we refer to the function that does not truncate the values; we refer to the original function of \probe{} in all other contexts. 


Since \bee\ handles real-valued cost functions, we adapt the $w$ function of \textsc{Bustle} by interpolating the values of $\delta$ from $0$ to $5$ according to \textsc{Bustle}'s binning scheme. We interpolate $x = \{0.00, 0.15, 0.25, 0.35, 0.50, 1.00 \}$ and $y = \{0, 1, 2, 3, 4, 5 \}$ by pairing the values $x$ and $y$: $(0.00, 0), (0.15, 1), (0.25, 2), (0.35, 3), (0.50, 4), (1.00, 5)$ and using a cubic spline to obtain an interpolant. Then, given a probability value returned by the model, the interpolant returns the $\delta$-value used to obtain $w'$, as shown in Equation~\ref{eq:reweight}. 
In the context of \bee, we use $w_{\bustle}$ to refer to the interpolated version of the original $w_{\bustle}$. 

We also introduce a novel cost function based on $w_{\bustle}$ defined as follows. 

\begin{equation}\label{eq:new-cf}
w_{\textsc{u}}(p) = 1+ \sum_{i = 1}^k w'_{\textsc{u}}(p_{i}) \,
\end{equation}
where,
\begin{equation}
w'_{\textsc{u}}(p_{i}) = w_{\textsc{u}}(p_{i}) - \log_2 \mathbb{P} (p_i) \,
\end{equation}

$\mathbb{P}(p_i)$ is the probability that $p_i$ is part of a solution according to \textsc{Bustle}'s neural network. This function computes the cost $w$ of a program as the sum of the costs $w'$ of its subprograms added to $1$, the cost of a production rule; the cost $w_{\textsc{u}}(p)$ for all terminal symbols $p$ is $1$. The cost $w_{\textsc{u}}'(p_{i})$ is given by $w_{\textsc{u}}(p_{i})$ added to the negative of the log of the probability that $p_i$ is part of a solution. \textsc{Bustle}'s cost function limits how much the neural model can change the cost of a program by mapping the probabilities to a value between $0$ and $5$. Our cost function leaves the influence of the neural model unbounded by using $- \log_2 \mathbb{P}(p)$. The subscript $\textsc{u}$ in $w_{\textsc{u}}$ stands for ``unbounded''.


We call $w_{\bustle}$ and $w_{\textsc{u}}$ \emph{penalizing functions} because the post-generation $w'$-value is not smaller than the $w$-value. We call $w_{\probe}$, $w_{\bustle}$, and $w_{\textsc{u}}$ \emph{additive} cost functions because the $w$-value of a program $p$ is computed by adding the cost of the subprograms of $p$. The adapted versions of $w_\probe$ and $w_\bustle$, and $w_\textsc{u}$ are monotonically increasing cost functions.  


In the next section, we describe a generic cost-guided bottom-up search algorithm that can be used to instantiate \probe\ and \bustle\ by changing the cost function the search uses. \heapsearch{} and \brute\ are described in Sections~\ref{sec:heap_search} and \ref{sec:brute_search}, respectively. 

\subsection{Generic Cost-Guided Bottom-Up Search}\label{section:cost-guided-bus}

\begin{algorithm}[t]
    \caption{Generic Cost-Guided Bottom-Up Search}
    \begin{algorithmic}[1]
    \PROCEDURE \textsc{Cost-Guided-BUS($\mathcal{G}, (\mathcal{I}, \mathcal{O}), w$)}
    \REQUIRE $\mathcal{G} = (V, \Sigma, R, I)$, set of input-output examples ($\mathcal{I}, \mathcal{O}$), and a cost function $w$.
    \ENSURE Solution program $p$ or $\perp$
    \STATE $c \gets 1$ \label{line:cg_bus_c=0}
    \WHILE{not timeout} \label{line:cg_bus_main_loop_s}
    \FOR{$p$ in \textsc{Next-Program}($\mathcal{G}, B, c, w$)}
    \STATE $o \gets $ \textsc{Execute}$(p, \mathcal{I})$ \label{line:gbus_execute}
    \IF{$o$ equals $\mathcal{O}$}
    \STATE \textbf{return} $p$ 
    \ENDIF
    \IF{$p$ is not equivalent to any program in $B$} \label{line:bus_eq_st}
    \IF{post-generation cost function} \label{line:reweight-start} 
    \STATE $c \gets w'(p)$ \label{line:gbus_reweight} 
    \ENDIF \label{line:reweight-end}
    \STATE $B[c]$.add$(p)$ \label{line:gbus_store_progs}    
    \ENDIF\label{line:bus_eq_end}
    \ENDFOR
    \STATE $c \gets c + 1$ \label{line:cg_bus_c++}
    \ENDWHILE \label{line:cg_bus_main_loop_e}
    \STATE \textbf{return} $\perp$ 
    \PROCEDURE \textsc{Next-Program($\mathcal{G}, B, c, w$)}
    \REQUIRE $\mathcal{G} = (V, \Sigma, R, I)$, programs bank $B$, cost $c$, and a cost function $w$.
    \ENSURE Program of cost $c$
    \FOR{$r \in R$} \label{line:gbus_pgen_start} 
    \IF{arity = $0$ and $w$($r$) = $c$} \label{line:gbus_pgen_a0_s} 
    \STATE \textbf{yield} $r$ \label{line:gbus_pgen_a0_e}
    \ELSIF{arity > $0$ and $w$($r$) < $c$} \label{line:gbus_pgen_a>0_s}

    \FOR{$(p_1,\cdots,p_{k})$\textbf{ in }$B\times\cdots \times B$} \emph{\#operation over the values of the dictionary $B$}
    \IF{$w(r(p_1, \cdots, p_{k})) = c$ and $(p_1,\cdots,p_k)$ is type-consistent with $r$}
    \STATE \textbf{yield} $r(p_1, \cdots, p_{k})$ \label{line:bus_pgen_ends}
    \ENDIF
    \ENDFOR
    \ENDIF
    \ENDFOR
    \end{algorithmic}
\label{alg:guided-bus}
\end{algorithm}

In cost-guided bottom-up search, programs are enumerated in the order of increasing cost $c$. The cost is assigned by a cost function $w$, such that programs that are more likely to lead to the solution have a lower cost. The search enumerates all programs of cost $c$ before enumerating programs of cost $c+1$. It begins by enumerating all programs of cost $1$, then in the second iteration, it uses the production rules $r \in R$ to combine the programs with cost $1$ and generate programs of cost $2$, and so on. The search continues until the solution program $p$ is found, or the search budget is exhausted and a failure $\perp$ is returned. 


\begin{example}


Consider the DSL shown in Figure \ref{fig:pcfg} along with the cost of each production rule $r \in R$, where the goal is to synthesize the program \texttt{concat(concat(1000, 1000), 1000)}. Costs are assigned in a way that they bias the search toward the solution (the symbol \texttt{1000} is cheaper than \texttt{1}, \texttt{2}, $\cdots$, \texttt{999}). The cost of program $p=r(p_1,\cdots,p_k)$ given by rule $r$ is equal to the sum of the costs of its subprograms $p_1, \cdots, p_k$ and the production rule $r$. Furthermore, the cost of each program is rounded to the nearest integer value; for instance, the cost $9.816589$ for program \texttt{1000} is rounded to $10$. Cost-guided \bus\ will start by enumerating all programs with cost $10$: \{\texttt{1}, \texttt{2}, $\cdots$, \texttt{1000}\}, since it is the cheapest set of programs that can be generated. No program can be generated with costs in $[1, 9]$. Next, it generates programs with cost $34$: \{\texttt{concat(1, 1)}, \texttt{concat(1, 2)}, $\cdots$, \texttt{concat(1000, 1000)}\}. For example, the cost of \texttt{concat(1000, 1000)} is calculated as follows. Since the cost of \texttt{concat} is $14.28771$ and the cost of \texttt{1000} is $9.8165$, which gives $14.28771 + 2 \times 9.8165 = 33.920888 \approx 34$. In the third iteration, it generates programs of cost $58$ and it finds the solution program \texttt{concat(concat(1000, 1000), 1000)}. 
\label{example:generic_cost}
\end{example}

Unlike the size-based enumeration, the cost-guided \bus\ is biased toward cheaper programs according to the cost function $w$, which often allows the search to solve the problem while possibly generating many fewer programs compared to the size-based methods. 

The pseudocode for generic cost-guided \bus\ is given in Algorithm \ref{alg:guided-bus}. It receives a grammar $\mathcal{G}$, a set of input-output examples ($\mathcal{I}$, $\mathcal{O}$), and a cost function $w$ to guide the search. It starts by initializing cost $c = 1$, and in each iteration of the main loop (lines \ref{line:cg_bus_main_loop_s}-\ref{line:cg_bus_main_loop_e}), it calls the procedure \textsc{Next-Program} to generate programs with a target $w$-value of $c$. The cost $c$ increases by one after each iteration (line \ref{line:cg_bus_c++}). \textsc{Next-Program} procedure iterates over all the rules $r \in R$ of grammar $\mathcal{G}$ to generate programs with the target cost $c$. If the arity of the rule $r$ is $0$, i.e., it is a terminal rule, and the cost of the rule $w(r) = c$, then it returns the program given by rule $r$ (lines \ref{line:gbus_pgen_a0_s}-\ref{line:gbus_pgen_a0_e}). Otherwise, if the arity of the production rule is greater than $0$, and the cost of the rule is less than the desired cost $w(r) < c$, then \textsc{Next-Program} generates all programs with the target cost $c$ given by the rule $r$ with the parameters given by the Cartesian product of all programs in the bank $B$. Similarly to Algorithm~\ref{alg:bus}, \textsc{Next-Program} considers only the programs $p_1, \cdots, p_k$ from $B$ that are type-consistent with $r$. 
For example, when generating programs $p$ with rule $I \rightarrow \texttt{concat}(I, I)$, \textsc{Next-Program} only considers programs that return strings as subprograms of $p$. 


Once the program $p$ is yielded to the \textsc{Cost-Guided-BUS} procedure, the program is executed (line~\ref{line:gbus_execute}) and, if it satisfies the desired output $\mathcal{O}$, then it is returned as a solution to the task. Otherwise, the algorithm checks for observational equivalence, if $p$ is not observational equivalent to any other program in the bank $B$, then $p$ is stored in $B$ indexed by its cost $c$. 
Post-generation functions, such as the one used in \bustle, use a temporary cost $w(p)$ during the generation of programs $p$ in \textsc{Next-Program} (line \ref{line:gbus_pgen_a>0_s}) and assign a new cost $w'(p)$ once the program is generated (line \ref{line:gbus_reweight}). The new cost $w'(p)$ is then used to store the programs in $B$. Once the program $p$ is added to $B$, the main loop is repeated until the search finds the solution program $p$ or it exhausts the time allowed for synthesis. 

Algorithm~\ref{alg:guided-bus} generalizes \probe, \bustle, and \textsc{TF-Coder}. It is equivalent to \probe\ if it receives the cost function of \probe. We note that \probe\ learns a cost function during the search, while Algorithm~\ref{alg:guided-bus} assumes a fixed pre-generation cost function. It is equivalent to \bustle\ if it receives the post-generation cost function of \bustle. Finally, it is equivalent to \textsc{TF-Coder} if it receives \textsc{TF-Coder}'s hand-crafted cost function, which is also a pre-generation function.  


\subsubsection{Lack of Best-First Ordering for \probe\ and \bustle}


Both \textsc{Probe} and \textsc{Bustle} lose information because they round off the costs of the programs. As a result, the order in which they search over programs of a given cost is arbitrary, not best-first. In the PCFG given in Figure \ref{fig:pcfg}, \texttt{1000} has a lower cost than $ \texttt{ 1 }, \texttt{ 2 }, \cdots, \texttt{ 999}$, but when rounded, their costs become equal, i.e., $10$, and they are enumerated in an arbitrary order, while according to the cost function \texttt{1000} should be evaluated before \texttt{1}, \texttt{2}, $\cdots$, \texttt{999}. Since the number of programs with the same rounded cost can increase rapidly as the search grows, the algorithms' rounding off scheme can substantially slow down the synthesis process. 
In the example of synthesizing \texttt{concat(concat(1000, 1000), 1000)}, depending on how the ties are broken, \textsc{Probe} might evaluate more than 910,000,000 programs before finding the solution. By contrast, \textsc{Bee Search} evaluates 1,001,001 programs to find the solution. 

One could achieve a ``near best-first search'' if the costs were multiplied by a large constant (e.g., 1,000,000) before being rounded off. The issue with this approach is that, due to the large number of different costs, there would be many iterations of \textsc{Probe} and \textsc{Bustle} that no program would be generated (similar to how Algorithm~\ref{alg:guided-bus} did not generate any program with cost $[1, 9]$ in Example~\ref{example:generic_cost}); we refer to these iterations as \emph{sterile iterations}. \textsc{Probe} and \textsc{Bustle} still pay the computational cost of checking whether there are programs to be generated of a particular cost. Given that the target program cost of a given iteration is $c$ and that the production rule $r$ with $k$ non-terminal symbols costs $c'$, \textsc{Probe} checks all combinations of programs $(p_1, \cdots, p_{k})$ whose added cost is $c - c'$, so that the non-terminal symbols of $r$ can be replaced by $(p_1, \cdots, p_{k})$ and the total cost of the generated program is $c$; \textsc{Bustle} follows the same approach, but using its cost function. The task of finding a subset of numbers that adds to a target value is NP-Complete~\shortcite{NPComplete} and finding such subsets is exponential in the number of non-terminals $k$. Although $k$ can be small (e.g., for \texttt{concat} $k=2$), computing the subsets can still hamper the performance of the algorithm if the subsets have to be computed many times during the search. 

We performed preliminary experiments with a modified version of \textsc{Probe} that uses the ``large-constant trick'' and discovered that the approach is too slow to be practical: most of the computational effort is spent computing the subsets with target cost values for which no program can be generated. The algorithm we introduce in this paper, \bee, bypasses the NP-Complete problem of finding a subset of numbers that adds to a target value by performing a search in a cost-tuple space (see Section~\ref{section:bee-search} for details). 

\subsection{Heap Search}\label{sec:heap_search}

\begin{algorithm}[t]
    \caption{Heap Search Algortihm}
    \label{alg:heap_search}
    \begin{algorithmic}[1]
        \PROCEDURE \textsc{HeapSearch}($\mathcal{G}$, ($\mathcal{I}, \mathcal{O}$), $w$)
        \REQUIRE $\mathcal{G} = (V, \Sigma, R, I)$, input-output examples ($\mathcal{I}, \mathcal{O}$), and a cost function $w$.
        \ENSURE Solution program $p$ or $\perp$
        \FOR {\textbf{all} non-terminal symbols $T$ $\in$ $V$}\label{line:heaps_initialization_s}
        \STATE Create an empty min heap $\textsc{Heap}_T$
        \STATE Create an empty hash table $\textsc{Succ}_T$
        \STATE Create an empty set $\textsc{Seen}_T$
        \FOR {\textbf{all} derivation rules of $T$ $\rightarrow$ $p$ }
        \STATE Add $p$ to $\textsc{Heap}_{T}$ with priority $w(p)$
        \STATE Add $p$ to $\textsc{Seen}_{T}$
        \ENDFOR
        \FOR {\textbf{all} derivation rules $T$ $\rightarrow$ $r( T_1, \cdots, T_{k})$ }
        \STATE $p \gets r(\textsc{Heap}_{T_1}.\text{top()}, \cdots, \textsc{Heap}_{T_k}.\text{top()})$
        \STATE Add $p$ to $\textsc{Heap}_{T}$ with priority $w(p)$
        \STATE Add $p$ to $\textsc{Seen}_{T}$
        \ENDFOR
        \ENDFOR \label{line:heaps_initialization_e}
        \STATE $p \gets \emptyset$
        \STATE $T \gets I$
        \WHILE{not timeout}
            \STATE $p \gets \textsc{Query}(p, T)$
            \STATE $o \gets $ \textsc{Execute}$(p, \mathcal{I})$ 
            \IF{$o$ equals $\mathcal{O}$}
                \STATE \textbf{return} $p$ 
            \ENDIF
            \STATE $T \gets \text{type}(p)$
        \ENDWHILE
        \STATE \textbf{return} $\perp$ 
    \PROCEDURE \textsc{Query}({$p$, $T$}) \label{line:heap_query_s}
    \REQUIRE A program $p$, and type $T$ of the program.
    \ENSURE Next cheapest program $p'$
    \IF{$p$ is a key in $\textsc{Succ}_{T}$}\label{line:heap_succ_s}
    \STATE \textbf{return} $\textsc{Succ}_{T}$[$p$]\label{line:heap_succ_e}
    \ELSE
    \STATE $p' \gets pop(\textsc{Heap}_{T}$) \label{line:heapsearch_pop} \emph{\# $p'$ is of form $r(p_1, \cdots, p_k)$}
    \STATE $\textsc{Succ}_{T}[p] \gets p'$
    \FOR {\textbf{all} $i \in [1,\cdots, k]$}\label{line:heap_ch_gen_s}
    \STATE $y_i = \textsc{Query}(p_i, T_i)$ \label{line:heap_search_query_r_call}
    \STATE $p'_i = r(p_1, \cdots, p_{i-1}, y_i, p_{i+1}, \cdots, p_{k})$ \label{line:heap_search_child_generated}
    \IF{$p'_i$ is not in $\textsc{Seen}_{T}$} \label{line:heap_not_in_seen}
    \STATE Add $p'_i$ to $\textsc{Heap}_{T}$ with priority $w(p'_i)$
    \STATE Add $p'_i$ to $\textsc{Seen}_{T}$ \label{line:heap_adding_children}
    \ENDIF
    \ENDFOR\label{line:heap_ch_gen_e}
    \ENDIF
    \STATE \textbf{return} $p'$ \label{line:heap_query_e}
    \end{algorithmic}
\end{algorithm}

\heapsearch{} \cite{Fijalkow2022ScalingNP} performs best-first bottom-up synthesis with respect to a cost function $w$ defined with a PCFG. It achieves best-first enumeration by using a set of priority queues, which we denote as $\textsc{Heap}$, one for each non-terminal symbol $T$. We say that the programs derived from $T$ are of type $T$. Each queue in \textsc{Heap} contains programs of type $T$ that are sorted according to the programs' costs.  
\heapsearch{} also uses a set of programs already seen in search ($\textsc{Seen}$) and a hash table ($\textsc{Succ}$), to store the \emph{successors} of all programs of type $T$ seen in search. The successor of a program $p$, denoted $p'$, is the next cheapest program of type $T$ to be generated, i.e.,  
a program generated with a production rule for $T$ with $w(p') > w(p)$ such that there is no $p''$ with $w(p') > w(p'') > w(p)$. 



\begin{example}
Consider an example of \heapsearch{} using the following DSL. 
\begin{equation*}
    I \rightarrow 1 \,|\, 2 \,|\, I + I \,.
\end{equation*}
Here, $w(1) < w(2) < w(I \rightarrow I+I)$, and similarly to \probe, the cost of a program $p$ is given by the sum of the cost of the production rule and its subprograms $p_i$.
\heapsearch{} first generates programs \texttt{1}, \texttt{2}, and \texttt{1+1} (the cheapest program which can be generated using $I+I$) and add them to \textsc{Heap}$_I$, a heap structure storing programs of type $I$ (this DSL only has programs of type $I$). The programs are sorted according to their cost $w$. Then, it first evaluates \texttt{1} (cheapest program), followed by the next cheapest program, \texttt{2}. Once \texttt{2} is removed from the heap, \heapsearch{} sets \texttt{2} as the successor of \texttt{1} in the \textsc{Succ} hash table, i.e., $\textsc{Succ}[\texttt{1}]=\texttt{2}$. Next, it pops  \texttt{1+1} out and \texttt{1+1} is assigned as the successor of \texttt{2}. Since \texttt{1+1} was derived from a non-terminal production rule ($I \rightarrow I + I$), \heapsearch{} generates \texttt{1+1}'s children by replacing each subprogram $p$ of \texttt{1+1} with $p$'s successor. 
That is, it first replaces \texttt{1} (the first subprogram) with its successor (\texttt{2}) to generate \texttt{2+1}. Then, \heapsearch{} replaces the second subprogram with its successor and \texttt{1+2} is generated. Both of these programs are added to \textsc{Heap}$_I$. In the next iteration, \heapsearch{} pops out the next program from \textsc{Heap}$_I$ and continues the search until a solution program $p$ is removed from \textsc{Heap}$_I$  or it times out and it returns failure.  
\label{example:heap_search}
\end{example}



The pseudocode for \textsc{Heap Search}, adapted from \citeauthor{Fijalkow2022ScalingNP}~\citeyear{Fijalkow2022ScalingNP}, is shown in Algorithm \ref{alg:heap_search}. The algorithm starts by initializing all data structures \textsc{Heap}$_T$, \textsc{Seen}$_T$, and \textsc{Succ}$_T$ with the programs given by the terminal symbols $p$ of $\mathcal{G}$. The structures are also initialized with the cheapest program of each type $T$, which is given by production rules $T \rightarrow r(T_1, \cdots, T_k)$. Each subprogram of type $T_i$ in $r(T_1, \cdots, T_k)$ is given by the cheapest program generated with a terminal symbol of type $T_i$. For example, for type $T_i$ we use $\textsc{Heap}_{T_i}.top()$ as all heaps are already initialized by the terminal symbols. In our example, the rule $I \rightarrow I + I$ generated the program \texttt{1 + 1} because program \texttt{1} was the cheapest program generated with a terminal symbol. 

\heapsearch{} invokes \textsc{Query} while there is still time allowed for synthesis. \textsc{Query} receives a program $p$ and its type $T$ as input; it returns the successor of $p$. \heapsearch{} initially calls \textsc{Query} with an empty program and the initial symbol of the grammar, $I$. Then, it calls \textsc{Query} with the program it returned on its last call. Each call to \textsc{Query} returns the next program according to the best-first ordering of the programs given by $w$. Each program \textsc{Query} returns is evaluated and, if it represents a solution, the program is returned; \heapsearch{} returns failure, $\perp$, if it times out before finding a solution. 

The \textsc{Query} procedure returns the successor of the program $p$ passed as input and it recursively generates the children of $p$'s successor. The base case of the recursion is when the successor of $p$ is already stored in \textsc{Succ}$_T[p]$ (lines~\ref{line:heap_succ_s} and \ref{line:heap_succ_e}). If the successor $p'$ is not in \textsc{Succ}$_T$, then it is removed from \textsc{Heap}$_T$ and $p'$ is set as the successor of $p$ in \textsc{Succ}$_T$. In Example~\ref{example:heap_search}, the successor of \texttt{2} is \texttt{1 + 1}; the latter was popped out of \textsc{Heap}$_T$ when \textsc{Query} was invoked for \texttt{2}. \textsc{Query} then generates all children of $p'$, by replacing each of $p'$'s $k$ subprograms with the successors of its subprograms. The subprogram successors are obtained by calling \textsc{Query} recursively (line~\ref{line:heap_search_query_r_call}). In our example, the children of \texttt{1 + 1} were \texttt{2 + 1} and \texttt{1 + 2}. The subprogram \texttt{2} of \texttt{2 + 1} was obtained by calling \textsc{Query} with the program \texttt{1}; program \texttt{2} was returned as the base case of the recursion as \textsc{Succ}$_T[\texttt{1}] = \texttt{2}$.  

\heapsearch{} only inserts a newly generated program if it has not been added to a \textsc{Heap} before. This is achieved by storing in the hash table \textsc{Seen} all programs generated in search (lines~\ref{line:heap_not_in_seen}--\ref{line:heap_adding_children}). Note that \heapsearch{} only does not re-insert in a \textsc{Heap} the exact programs that were seen before. This is different from the equivalence check \bus\ algorithms perform. While \bus\ algorithms would disregard the program \texttt{1 + 1} because it is observational equivalent to \texttt{2}, \heapsearch{} considers both programs in search. \heapsearch{} provably evaluates programs in a best-first ordering according to a pre-generation cost function.

\subsubsection{Limitations of \heapsearch{}}

\heapsearch{} sacrifices the ability of remove observational equivalent programs to attain best-first ordering with respect to a pre-generation cost function. If \heapsearch{} performed equivalence check, it would no longer be a complete algorithm as it could fail to find a solution even for solvable problems. In our example, \texttt{1 + 1} would be eliminated as it is observational equivalent to \texttt{2} and \heapsearch{} would not generate the children of \texttt{1 + 1}, which cannot be generated through another branch of the search. 

Moreover, \textsc{Heap Search} is guaranteed to \emph{evaluate} programs in a best-first order, but it does not \emph{generate} programs in the best-first order. Whenever \textsc{Query} is called, it returns a single program that is evaluated, however, it generates many other programs (lines \ref{line:heap_ch_gen_s}-\ref{line:heap_ch_gen_e}) that might never be evaluated because they are more expensive than the cost of a solution program.

In addition, \textsc{Heap Search} was not designed to search with post-generation functions such as $w_{\bustle}$. If the algorithm is modified to handle post-generation cost functions, then it would not be able to search in a best-first order as its proof implicitly assumes a pre-generation cost function.

\subsection{Brute Search} \label{sec:brute_search}

\begin{algorithm}[t]
    \caption{Brute Search}
    \label{alg:brute}
    \begin{algorithmic}[1]
    \PROCEDURE \textsc{Brute}($\mathcal{G}, (\mathcal{I}, \mathcal{O}), w$)
    \REQUIRE $\mathcal{G} = (V, \Sigma, R, I)$, input-output examples ($\mathcal{I}, \mathcal{O}$), and a cost function $w$.
    \ENSURE Solution program $p$ or $\perp$
    \STATE $n_0 \gets \{T \vert T \in \Sigma \wedge I \rightarrow T \}$ \label{line:brute_root}
    \IF{a program $p$ in $n_0$ is a solution}
    \STATE \textbf{return} $p$
    \ENDIF
    \STATE Add $n_0$ to $Q$ with priority $0$ (highest priority possible)
    \STATE B = \{\}
    \WHILE{not timeout}\label{line:brute_main_loop_s}
    \STATE{$n \gets $ $Q$.pop()} \emph{\# node $n$ can represent one or multiple programs} \label{line:brute_popping}
    \FOR{each child program $p$ of $n$} \label{line:brute_ch_s} 
    \STATE{$o \gets $\textsc{Execute}$(p, \mathcal{I})$} 
    \IF{$o$ equals $\mathcal{O}$}
    \STATE \textbf{return} $p$ \label{line:brute_return}
    \ENDIF
    \IF{$o$ not in $B$} \label{line:brute_observational_equivalent}
    \STATE $B$.add$(p,o)$ \label{line:brute_add_to_bank}
    \STATE Add node representing $p$ to $Q$ with priority $w(p)$ \label{line:brute_add_to_queue}
    \ENDIF
    \ENDFOR 
    \ENDWHILE \label{line:brute_main_loop_e}
    \STATE \textbf{return} $\perp$ 
    \end{algorithmic}
\end{algorithm}

\brute{} is a best-first search algorithm we adapted to inductive program synthesis that is loosely inspired by the inductive logic programming system of the same name~\cite{brute}. Let us consider an example with the DSL from Example \ref{example:heap_search}: $I \rightarrow 1 \,|\, 2 \,|\, I + I$. 

In \brute, the root of the tree represents all programs given by symbols appearing in terminal production rules. In our example, the root represents the programs \texttt{1} and \texttt{2}. The root is the only node representing multiple programs; all other nodes in the tree represent one program. The set of children of a node $n$ in the \brute{} search tree is defined as follows. For each non-terminal production rule we generate all possible programs given by the Cartesian product of all programs seen in search where at least one of the subprograms of the children is given by a program $n$ represents. In our example, the children of the root are given by the Cartesian product of programs \texttt{1} and \texttt{2} with rule $I \rightarrow I+I$: \texttt{1 + 1}, \texttt{1 + 2}, \texttt{2 + 1}, and \texttt{2 + 2}. The children nodes representing programs \texttt{1 + 1} and \texttt{2 + 1} are pruned because they are observational equivalent to \texttt{2} and \texttt{1 + 2}, respectively. The next layer of the tree is generated following the same procedure. For example, the children of the node representing \texttt{1 + 2} are given by the Cartesian product of all programs observed in search as subprograms of the production rule $I \rightarrow I+I$, where at least one subprogram is \texttt{1 + 2}. The children of the node representing \texttt{1 + 2} are: \texttt{1 + (1 + 2)}, \texttt{2 + (1 + 2)}, \texttt{(1 + 2) + 1}, \texttt{(1 + 2) + 2}, \texttt{(1 + 2) + (2 + 2)}, \texttt{(2 + 2) + (1 + 2)}; after pruning observational equivalent programs, we obtain: \texttt{2 + (1 + 2)} and \texttt{(1 + 2) + (2 + 2)}.  

The pseudocode of \brute{} is shown in Algorithm \ref{alg:brute}. The root of the tree, $n_0$, is defined as the set of programs given by the terminal rules (line \ref{line:brute_root}); if any of these programs $p$ represents a solution, then $p$ is returned. Otherwise, $n_0$ is added to a priority queue $Q$. 
In every iteration of the algorithm, \brute{} pops the cheapest node $n$ from $Q$ (line~\ref{line:brute_popping}) and generates its children, as illustrated in our example above. Each child of $n$ represents a program $p$. If $p$ is a solution, then it is returned. Otherwise, if $p$ is not observational equivalent to another program in $B$ (line~\ref{line:brute_observational_equivalent}), then (i) \brute{} adds $p$ to the bank of programs $B$ (line \ref{line:brute_add_to_bank}) and (ii) a node representing $p$ is added to the queue with priority $w(p)$ (line~\ref{line:brute_add_to_queue}).

\subsubsection{Limitations of \brute{}} 

Similarly to \heapsearch{}, \brute{} only evaluates programs in best-first order and does not generate programs in best-first order. In each iteration, it evaluates a single program but possibly generates many more. In \brute{} the branching factor can be very large compared to \heapsearch{} since it considers all the programs evaluated so far in the Cartesian product used to generate the children of a node. This can substantially slow down the synthesis process and increase its memory usage because it generates many programs that will never be evaluated, as they can be more expensive than the solution program.

\section{Best-First Bottom-Up Search (\textsc{Bee Search})} \label{section:bee-search}

\bee\ attains best-first ordering with respect to the generation of programs by performing a search in a cost-tuple space, which is explained in the next section. 

\subsection{Cost-Tuple Space}

\textsc{Bee Search} searches over a set of cost-tuple spaces to determine the next program to be generated during search. We define one cost-tuple space for each non-terminal rule. A state in a cost-tuple space is defined by a tuple with $k$ integers in $\mathbb{N}$ (we use $1$ as the index of the first element of an array), where $k$ is the number of non-terminal symbols in the production rule. For example, $(i_1, i_2)$ represents a state in the cost-tuple space of the rule $I \rightarrow $ \texttt{concat}$(I, I)$; $i_1$ and $i_2$ represent indexes in an ordered set $C$ that contains the cost of all programs generated in search and it is sorted from the smallest to the largest cost. We use the words `state' and `cost-tuple state' interchangeably. Since each state is related to a production rule $r$, we denote by $n.r$ the production rule associated with $n$. 

Each cost-tuple state represents a set of programs that are to be generated. For example, the state $(1, 1)$ of the space for $I \rightarrow \texttt{concat}(I, I)$ represents all programs that can be generated by replacing the first and second non-terminal symbols of \texttt{concat} by the cheapest programs encountered in search. According to the costs of the PCFG shown in Figure~\ref{fig:pcfg},  \texttt{1000} is the program with the lowest $w$-value encountered in the space defined by the DSL ($w$-value of $9.8165$). Initially, $C = \{9.8165\}$ and the cost-tuple state $n = (1, 1)$ for $I \rightarrow \texttt{concat}(I, I)$ represents the program \texttt{concat(1000, 1000)} and the cost-tuple state's $w$-value can be computed as $w(n) = 14.28771 + 9.8165 + 9.8165 = 33.92071$. For additive cost functions, the $w$-value of the state is equal to the $w$-value of the programs that the state represents. Although only \texttt{1000} costs $9.8165$, 
each state $n = (i_1, i_2, \cdots, i_k)$ can represent multiple programs, as there might be multiple programs with the $i$-th cost in $C$. 

\bee\ uses a priority queue $Q$ that is initialized with one cost-tuple state $(1, \cdots, 1)$ for each non-terminal rule, where the size of the tuple matches the number of non-terminal rules on the right-hand side of the rule. Each cost-tuple state represents a set of programs with a given cost $w$; therefore, the priority queue is sorted according to the $w$-value of each cost-tuple state. In every iteration, \bee\ pops the cheapest cost-tuple state $n = (i_1, i_2, \cdots, i_k)$ from $Q$ and generates all programs $n$ represents. If none of these programs represents a solution to the program synthesis task, then the children of $n$ are generated and inserted in $Q$. The children of $n$ are given by the set of states that differ from $n$ with the addition of $1$ to an entry of $n$: $\{(i_1 + 1, i_2, \cdots, i_{k}), (i_1, i_2+1, \cdots, i_{k}), \cdots, (i_1, i_2, \cdots, i_{k} + 1)\}$. We say that a cost-tuple state $n$ is expanded when its children are generated. 

Let us consider the following example. 

\begin{example}
Table \ref{table:bfs-running-example} shows a trace of \bee\ for the problem of synthesizing the program  {\texttt{concat(1000, concat(1000, 1000))}}. In this example, we will consider the cost function $w_{\probe}$ described in Figure~\ref{fig:pcfg}. The table shows updates to the \bee's cost list $C$ and priority queue $Q$, programs generated, and the number of programs generated in each iteration (Count). The entries in column $Q$ are cost-tuples of the form [cost, cost-tuple]; costs are truncated to four decimal places and the name of production rule is omitted from the cost-tuple for brevity. The number of entries in a cost-tuple indicates the arity of the production rule; the empty cost-tuple $(\,)$ represents all programs generated with a terminal rule and cost-tuples with two entries, e.g., $(1, 1)$, represent states for \texttt{concat}.

\begin{table*}[h]
    \footnotesize
    \centering
    \begin{tabular}{ccccc}
    \toprule
    Itr. \# & $C$ & $Q$ & Programs & Count  \\ \midrule
    $1$ & $\{9.8165\}$  & \makecell{$\{[9.9660, (\,)],[33.9207, (1,1)]\}$} & \texttt{1000} & $1$  \\ \midrule
    $2$ & $\{9.8165, 9.9660\}$ & $\{[33.9207, (1,1)]\}$ & \texttt{1}, \texttt{2}, $\cdots$, \texttt{999} & $999$  \\ 
    \midrule
    $3$ & $\makecell{\{9.8165, 9.9660,\\ 33.9207\}}$ & $\makecell{\{[34.0702, (2,1)], [34.0702,(1,2)]\}}$ & \texttt{concat(1000,1000)} & $1$  \\ 
    \midrule
     $4$ & $\makecell{\{9.8165, 9.9660,\\ 33.9207, 34.0702\}}$ & \makecell{ $\{ [34.2197, (2,2)], [58.0249, (1,3)],$ \\ $[58.0249, (3,1)] \}$ }
     &  \makecell{\texttt{concat(1000, 1)}, \\ $\ldots$ \texttt{concat(999, 1000)}} & $1998$ 
     \\ 
     \midrule
     $5$ & $\makecell{\{9.8165, 9.9660,\\ 33.9207, 34.0702,\\ 34.2197 \}}$ & \makecell{$\{[58.0249, (1,3)], [58.0249, (3,1)],$ \\ $[58.1744, (3,2)]\}$} & \makecell{\texttt{concat(1, 1)}, \\  \ldots \texttt{concat(999, 999)}} & $998001$  \\ \midrule
     $6$ & \makecell{$\{9.8165, 9.9660,$ \\ $33.9207, 34.0702,$ \\ $34.2197, 58.0249\}$} & \makecell{$\{[58.1744, (3,2)], [58.1744, (2,3)],$ \\ $[58.1744, (4,1)],[58.1744, (1,4)]\}$} & \makecell{\texttt{concat(1000, }\\ \texttt{concat(1000, 1000))}} & $1$ \\ \bottomrule
    \end{tabular}
    \caption{The table shows the trace of \bee\ for finding {\texttt{concat(1000, concat(1000, 1000))}} with the cost function described in Figure~\ref{fig:pcfg}.  The table shows the cost list $C$, priority queue $Q$, the generated programs, and the count of programs generated at each iteration.}
    \label{table:bfs-running-example}
\end{table*}

\bee{} starts by initializing $C$ with the cost of the cheapest program generated with a terminal rule. $Q$ is initialized with empty cost-tuple states, with one state for each terminal rule that is not the cheapest, and with one cost-tuple state for each non-terminal rule. In this example, all programs generated with production rules $I \rightarrow j$ for $j \in \{\texttt{1}, \texttt{2}, \cdots, \texttt{999}\}$ are represented in state $[9.9660, ()]$, while state $[33.9207, (1,1)]$ represents the program \texttt{concat(1000, 1000)}.\footnote{\bee\ would actually insert one cost-tuple state for each program generated with a terminal rule: $[9.9660, (), \texttt{1}], [9.9660, (), \texttt{2}], \cdots, [9.9660, (), \texttt{999}]$. For the interest of space we represent all these cost-tuple states as $[9.9660, ()]$ in this example. \bee\ would also spend one iteration with each of these states, which we also simplify in this example to a single iteration.} 
In the first iteration, \bee\ pops the cheapest node of $Q$, which represents the program \texttt{1000}, whose cost is $9.8165$. In the second iteration, it removes the state $[9.9660, ()]$ and generates 999 programs \texttt{1}, $\cdots$, \texttt{999} with cost $9.9660$; both costs are added to $C$.

Next, the search removes $n = [33.9207, (1,1)]$ from $Q$ and generates \texttt{concat(1000, 1000)} with a cost of $33.9207$. Here, $1$ in the cost-tuple refers to the first index of the cost set $C$ and the only program we have with that cost is \texttt{1000}; $(1, 1)$ indicates that both arguments of \texttt{concat} should be of cost $9.8165$, hence the program \texttt{concat(1000, 1000)} whose cost equals the sum of the costs of the subprograms \texttt{1000} and the cost of the operation \texttt{concat}. Since $n$ represents a non-terminal rule, the node is expanded, thus generating the cost-tuple states $\{[34.0702, (2,1)], [34.0702, (1,2)]\}$. Similarly, in fourth and fifth iterations, the search generates all programs with cost $34.0702$ and $34.2197$, respectively. After generating all programs of each node, it adds their children to $Q$. \bee\ finds the solution program with cost $58.0249$ in iteration 6. Note that there are other programs with similar cost (i.e., $58.1744$) and \bee\ is able to distinguish them from the solution node. The programs with cost $58.1744$ are not even generated in memory, but only their cost-tuple states. 
\label{example:running-example-sol-bee-search}
\end{example}

\bee's ability to distinguish programs with similar cost values (e.g., $58.0249$ and $58.1744$) can make a substantial difference in terms of search running time. As an example, \probe{} is unable to distinguish $58.0249$ (used in the sixth iteration of the example) and $58.1744$ (the next cheapest cost) as both values are truncated to $58$. As a result, \probe{} evaluates approximately one billion programs to find the solution for our example. By contrast, \bee\ evaluates only approximately one million programs to find the solution. 

Further, the search in the cost-tuple space allows for a best-first search with respect to the generation of programs. At each iteration, \bee\ only generates the programs with the next cheapest possible cost. Unlike other best-first search algorithms such as \brute{} and \heapsearch, \bee\ does not have to generate the programs to identify the ones that will be evaluated next in the search. This is achieved at the cost of generating cost-tuple states that might never be evaluated in search (i.e., cost-tuple states for which we do not generate their programs). This feature is an advantage because the branching factor in the cost-tuple space is much smaller than the branching factor in the program space. We show empirically the advantages of performing best-first search with respect to the generation of programs. 

\subsection{Search Algorithm}
\label{Section:SearchAlgos}

\begin{algorithm}[t]
\caption{\textsc{Bee Search}}
\label{alg:bbus}
\begin{algorithmic}[1]
\PROCEDURE \textsc{BeeSearch}($\mathcal{G}, (\mathcal{I}, \mathcal{O}), w$)
\REQUIRE $\mathcal{G} = (V, \Sigma, R, I)$, input-output examples ($\mathcal{I}, \mathcal{O}$), and a monotonically increasing cost function $w$.
\ENSURE Solution program $p$ or $\perp$
\STATE $C \gets \{\min_{T \in \Sigma} w(T)\}$
\STATE $Q \gets \emptyset$ \emph{\# Sorted according to the $w$-values}
\FOR{each non-terminal rule $S$ in $\mathcal{G}$}
\STATE $Q$.push$([S, (1, \cdots, 1)])$ \label{line:init_q_1}
\ENDFOR
\FOR{each terminal symbol $S$ in $\mathcal{G}$}
\STATE $Q$.push$([S])$ \label{line:init_q_2}
\ENDFOR
\WHILE{not timeout} \label{line:main_loop}
\STATE $p, c \gets $ \textsc{Next-Program}$(B, Q)$
\STATE $o \gets $ \textsc{Execute}$(p, \mathcal{I})$
\IF{$o$ equals $\mathcal{O}$}\label{line:bee_solution_check}
\STATE \textbf{return} $p$ \label{line:return_p}
\ENDIF
\STATE $B[c]$.add$(p)$ \label{line:add_bank}
\ENDWHILE
\STATE \textbf{return} $\perp$ \label{line:failure}
\end{algorithmic}
\end{algorithm}

\begin{algorithm}[h]
\caption{\textsc{Next-Program} procedure}
\label{alg:next}
\begin{algorithmic}[1]
\PROCEDURE \textsc{Next-Program($B, Q$)}
\REQUIRE Bank of programs $B$, priority queue $Q$
\ENSURE Next cheapest program $p$ of cost $c$
\STATE $n \gets Q.$pop() \label{line:pop}
\IF{pre-generation cost function} \label{line:pre_1}
\STATE $C \gets C \cup w(n)$ \label{line:pre_2}
\ENDIF
\IF{$n$ represents a terminal symbol}
\STATE \textbf{return} $n.r$, $w(n)$ \label{line:return_terminal}
\ENDIF
\FOR{$i$ in $\{1, \cdots, k\}$} \label{line:expand_1}
\STATE $n' \gets n$
\STATE $n'[i] \gets n'[i] + 1$
\IF{$n'$ is not a duplicate}
\STATE $Q$.push($n'$) \label{line:expand_2}
\ENDIF
\ENDFOR
\FOR{$(p_1, \cdots, p_{k})$ \textbf{in} $B[C[n[1]]] \times \cdots \times B[C[n[k]]]$} \label{line:bbus_cartesian}
\IF{$(p_1, \cdots, p_k)$ is type-consistent with $n.r$}
\STATE $p \gets n.r(p_1 \cdots, p_{k})$ \label{line:bbus_new_programs}
\IF{$p$ is equivalent to any program in $B$} \label{line:equivalence_check_1}
\STATE \textbf{continue} \label{line:equivalence_check_2}
\ENDIF
\IF{post-generation cost function}
\STATE Insert $w'(p)$ in $C$ while keeping $C$ sorted \label{line:post}
\IF {$w'(p) < C[i]$, where $i = \max_j \{j \in n \vert n \in Q\}$} \label{line:reheapify_1}
\STATE Heapify $Q$ \label{line:reheapify_2}
\ENDIF
\STATE \textbf{yield} $p$, $w'(p)$ \label{line:return_1}
\ELSE
\STATE \textbf{yield} $p$, $w(p)$ \label{line:return_2}
\ENDIF
\ENDIF
\ENDFOR
\end{algorithmic}
\end{algorithm}

Algorithms~\ref{alg:bbus} and \ref{alg:next} show the pseudocode for \textsc{Bee Search}. The algorithm receives a DSL, a set of input-output examples, and a monotonically increasing cost function $w$; it returns a solution $p$ or failure $\perp$. The search starts by adding in $C$ the $w$-value of the cheapest program generated with a terminal rule and initializing a priority queue $Q$ with one state $(1, \cdots, 1)$ for each non-terminal rule (line~\ref{line:init_q_1} of Algorithm~\ref{alg:bbus}). $Q$ also receives one state for each terminal rule; these states do not have a tuple associated with them (line~\ref{line:init_q_2}) because they do not generate children in the cost-tuple space. The ordering of $Q$ is defined by the $w$-values of the cost-tuple states. 

In every iteration of the algorithm (iteration of the while loop in Algorithm~\ref{alg:bbus}), it generates the next program $p$ (see Algorithm~\ref{alg:next}), which is executed with the input values $\mathcal{I}$, thus obtaining the outputs $o$. 
If $o$ matches the desired output $\mathcal{O}$, then $p$ is a solution, and it is returned (line~\ref{line:return_p}). If $p$ is not a solution, it is added to the bank of programs $B$, which is indexed by the cost of the programs $c$ (line~\ref{line:add_bank}). This process continues until either a solution is found or the search times out, in which case failure $\perp$ is returned (line~\ref{line:failure}). 

Algorithm~\ref{alg:next} defines the program that is evaluated next in search. We remove a state $n$ with the smallest $w$-value from $Q$ (line~\ref{line:pop}).
If the state represents a terminal rule, then the state has no children (i.e., it does not have non-terminal symbols that can be used to generate new programs). In this case, we just return the program $n.r$ and its cost $w(n)$ (line~\ref{line:return_terminal} of Algorithm~\ref{alg:next}), where function $r$ returns the righthand side of the rule state $n$ represents. 
We expand $n$ if it represents a non-terminal rule, i.e., we generate all children $n'$ of $n$ and we add them to $Q$ if they were not already inserted in $Q$
(lines~\ref{line:expand_1}--\ref{line:expand_2}). 

If $n$ represents a non-terminal rule, we generate the set of programs $n$ represents and we return each program $p$ as an iterator to Algorithm~\ref{alg:bbus}.\footnote{An iterator is implemented with the keyword ``yield'' and it allows a function $f_1$ return a value to a function $f_2$ and, once $f_2$ calls $f_1$ again, the execution in $f_1$ continues from the last ``yield'' executed.}  The programs $n$ represents are generated by replacing each non-terminal symbol of $n$'s rule with a program from $B$. Let $n[j]$ be the $j$-th value of $n$, the $j$-th non-terminal symbol of $n.r$ is replaced by a program with cost $C[n[j]]$. Given that $B$ is indexed by the programs' cost, we obtain the programs $(p_1, \cdots, p_{k})$ that replace the non-terminals of $n.r$ by taking the Cartesian product $B[C[n[1]]] \times \cdots \times B[C[n[k]]]$ (line~\ref{line:bbus_cartesian}). Similarly to Algorithms~\ref{alg:bus} and \ref{alg:guided-bus}, \bee\ performs type checking while generating programs with the Cartesian product operation. 
We denote newly generated programs as $n.r(p_1, \cdots, p_{k})$. Since all programs $B[C[n[i]]]$ have the same cost, the $w$-value of all programs $n.r(p_1, \cdots, p_{k})$ matches the $w$-value of $n$. We discard observational equivalent programs (lines~\ref{line:equivalence_check_1} and \ref{line:equivalence_check_2}).

\textsc{Bee Search} treats post-generation and pre-generation functions differently. For pre-generation functions, the $w$-value of all programs generated from state $n$ is identical to the $w$-value of $n$. Thus, the cost of the programs can be added to the set $C$ before generating them (lines~\ref{line:pre_1} and \ref{line:pre_2}). For post-generation functions, the $w'$-values are known only after generating the programs, so they are inserted in $C$ in line~\ref{line:post}. While for pre-generation functions the cost values are inserted in increasing order in $C$ and they can simply be appended at the end of $C$, the costs $w'$ are not necessarily generated in increasing order (programs generated from the same tuple $n$ can have different $w'$-values). \textsc{Bee Search} maintains $C$ sorted with post-generation functions by inserting the $w'$-values in their correct positions in $C$. Let $i$ be the largest index in a cost-tuple state in $Q$. If the $w'$-value inserted in $C$ is smaller than the $i$-th value in $C$ (denoted $C[i]$ in the pseudocode), then $Q$ needs to restore its heap structure through a ``heapify'' operation. This is because, once the $w'$-value is inserted in $C$, some of the indexes $j$ in the cost-tuple states in $Q$ might not refer to the same $C[j]$ they referred to prior to the insertion of the new $w'$-value. In the worst case, the insert operation is linear in the size of $C$. The heapify operation is more expensive because it is linear in the size of $Q$ and $Q$ tends to be much larger than $C$. However, in practice, the heapify operation is rarely performed. By keeping $C$ sorted and $Q$ with a valid heap structure, we can prove that \bee\ is a best-first search algorithm also for post-generation functions (see Section~\ref{sec:guarantees}). 
Each program and its cost are returned as an iterator in lines~\ref{line:return_1} and \ref{line:return_2}. If the $w'$ function is encoded in a neural network, instead of evaluating one program at a time, we evaluate all programs generated from $n$ in a batch for efficiency; the batch evaluation is not shown in the pseudocode. 

\subsection{Theoretical Guarantees}
\label{sec:guarantees}

\bee\ is complete, correct, and performs best-first bottom-up search with respect to an additive cost function $w$ that can be either of the pre-generation or the post-generation type. In this section, we provide the proofs for these properties.

\begin{lemma}
Let $w$ be an additive cost function. If the values in $C$ are sorted from smallest to largest, then the $w$-values of the cost-tuple states increase monotonically in \bee. 
\label{lemma:monotone_state}
\end{lemma}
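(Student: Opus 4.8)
The plan is to establish the structural fact that every child of a cost-tuple state has $w$-value no smaller than that state, and then to combine this with the min-heap discipline of $Q$ to conclude that the $w$-values of the states popped during a run of \bee\ form a non-decreasing sequence.

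First I would record the closed form of $w(n)$. For an additive cost function and a state $n = (n[1], \dots, n[k])$ attached to the non-terminal rule $n.r$, the programs $n$ represents have the form $n.r(p_1, \dots, p_k)$ with each $p_j$ of cost $C[n[j]]$, so additivity gives $w(n) = w(n.r) + \sum_{j=1}^{k} C[n[j]]$ (a terminal state has $k=0$ and no children, so it is vacuous for what follows). The core step is then immediate: a child $n'$ of $n$ is obtained by replacing $n[l]$ with $n[l]+1$ for exactly one coordinate $l$, hence $w(n') - w(n) = C[n[l]+1] - C[n[l]]$, and this quantity is nonnegative \emph{precisely} because the values in $C$ are sorted from smallest to largest. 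This is the only place the hypothesis on $C$ is used.

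Next I would argue the monotonicity of the popped sequence by induction on the iteration count. Let $n_t$ be the state popped at iteration $t$; it is a minimum element of $Q$ at that moment. The only states inserted into $Q$ during iteration $t$ are children of $n_t$, each with $w$-value at least $w(n_t)$ by the previous step. Hence the state $n_{t+1}$ popped next is either (i) one of those children, so $w(n_{t+1}) \ge w(n_t)$, or (ii) a state already present in $Q$ when $n_t$ was removed, in which case $w(n_{t+1}) \ge w(n_t)$ because $n_t$ was a minimum of $Q$. Either way $w(n_t) \le w(n_{t+1})$, which is the claim.

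The delicate point is the post-generation (penalizing, additive) case, where $C$ is not fixed: when a program $p$ generated from $n_t$ is produced, its value $w'(p)$ is inserted into $C$ in sorted position and $Q$ is heapified whenever an index referenced by a queued state is affected, so that $Q$ remains a valid min-heap and $C$ remains sorted throughout — making step (ii) above refer to the updated priorities. To close the induction in this case I would use two facts: (a) $p$ has $w(p) = w(n_t)$, since its subprograms carry the costs $C[n_t[j]]$ and $w$ is additive; and (b) $w'(p) \ge w(p) = w(n_t)$, since the post-generation additive functions considered here ($w_{\bustle}$ and $w_{\textsc{u}}$) are penalizing. As $w(n_t)$ dominates all previously popped $w$-values by the induction hypothesis, inserting a value $w'(p) \ge w(n_t)$ into $C$ and re-sorting leaves every queued state with $w$-value still at least $w(n_t)$ (this is the bookkeeping one has to verify coordinate by coordinate using the position at which $w'(p)$ is inserted), so the step survives the update. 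I expect this dynamic-$C$ bookkeeping to be the main obstacle; for pre-generation functions $C$ simply grows by appending non-decreasing values and the argument is unencumbered, and in all cases the essential inequality is the elementary one driven by $C$ being sorted.
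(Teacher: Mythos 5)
Your core step is exactly the paper's proof of this lemma: for an additive $w$, write $w(n) = w(n.r) + \sum_{j=1}^{k} C[n[j]]$, observe that a child differs in exactly one coordinate incremented by one, and conclude from the sortedness of $C$ that the child's $w$-value is not smaller. The only small divergence there is that you conclude a non-strict inequality; the paper additionally uses the fact that $C$ stores \emph{distinct} cost values (it is an ordered set), which gives the strict inequality $w(n) < w(n')$ that Theorem~\ref{theorem:pre} later quotes. Everything after that in your writeup — the induction on popped states, the Dijkstra-style case split, and the dynamic-$C$/heapify bookkeeping for penalizing post-generation functions — is not part of this lemma at all: the lemma takes ``$C$ is sorted'' as a hypothesis and asserts only the static parent-to-child monotonicity in the cost-tuple space, while the claims you are trying to prove there (expansions/generations occur in best-first order, and $C$ remains sorted with its entries final at expansion time) are precisely the content of Theorems~\ref{theorem:pre} and~\ref{theorem:post}, where the paper proves them by separate inductions. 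Tellingly, the piece you flag as the ``main obstacle'' (the coordinate-by-coordinate check that inserting $w'(p)$ into $C$ cannot disturb queued states) is the heart of the proof of Theorem~\ref{theorem:post}, not of this lemma, so for the statement actually being asked your argument is complete and matches the paper; just tighten the inequality to strict via uniqueness of the values in $C$, and defer the dynamic argument to where it belongs.
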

\begin{proof}
The children $n'$ of a state $n = (i_1, i_2, \cdots, i_k)$ are identical $n$, except for one entry $j$ in $n$ that is incremented by $1$. For additive functions we have $w(n) = K + \sum_{i = 1}^k C[i]$, where $K$ is the cost of the production rule $n$ represents. Then, we have the following.
\begin{align*}
w(n) &= K + \sum_{i = 1}^k C[i] \\
&= K + C[j] + \sum_{\substack{i = 1 \\ i \neq j}}^k C[i] \\
&< K + C[j + 1] + \sum_{\substack{i = 1 \\ i \neq j}}^k C[i]\\ 
&= w(n') \,.
\end{align*}
The inequality is due to $C$ being sorted from smallest to largest and the values in $C$ being unique. 
\end{proof}

The following theorems state that \textsc{Bee Search} performs a best-first search with respect to the generation of programs for a family of $w$ functions that includes $w_{\textsc{Probe}}$ (Theorem~\ref{theorem:pre}) and for a family of $w$ functions that includes $w_{\textsc{Bustle}}$ and $w_{\textsc{u}}$ (Theorem~\ref{theorem:post}).
\begin{theorem}
\textsc{Bee Search} generates programs in best-first order with respect to an additive pre-generation cost function $w$.
\label{theorem:pre}
\end{theorem}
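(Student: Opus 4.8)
The plan is to show that the sequence of programs \bee\ creates in memory has nondecreasing $w$-values; equivalently, at the moment \bee\ generates a program $p$, every program $p'$ with $w(p') < w(p)$ has already been generated. I would prove this by a single induction over the iterations of the main loop that simultaneously maintains three invariants: (i) the cost list $C$ is sorted in strictly increasing order and contains exactly the distinct costs of all programs generated so far, together with the cost of the cheapest terminal that is present from initialization; (ii) the priority queue $Q$ pops cost-tuple states in nondecreasing order of their $w$-value, and every state currently in $Q$ has $w$-value at least that of the most recently popped state; and (iii) when a state $n$ with $w(n)=c$ is popped, the bank $B$ already contains every program of cost strictly less than $c$, indexed by cost. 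Invariant (ii) rests on Lemma~\ref{lemma:monotone_state}, whose hypothesis is that $C$ is sorted, while invariant (i) relies on the fact that $C$ only ever has values appended at the end, which in turn uses (ii). This mutual dependence between (i) and (ii) — which forces the two to be carried through the induction together rather than proved one after the other — is the part I expect to require the most care; the naive "prove $C$ stays sorted, then invoke Lemma~\ref{lemma:monotone_state}" argument is circular.

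Given these invariants the theorem follows quickly. First I would record the correspondence that, for an additive pre-generation $w$, every program $p = r(p_1,\dots,p_k)$ is represented by exactly one cost-tuple state of rule $r$, namely $n_p=(i_1,\dots,i_k)$ with $C[i_j]=w(p_j)$, and then $w(n_p)=w(p)$ by additivity. Since $w$ is monotonically increasing, $w(p_j) < w(p)$ for every $j$, so each subprogram is strictly cheaper than $p$; by the induction its representing state has been popped and expanded during an earlier "phase" (a maximal block of iterations popping states of a common $w$-value), so by (iii) all the $p_j$ lie in the appropriate bins of $B$ when $n_p$ is popped, and $p$ is actually created when \textsc{Next-Program} processes $n_p$. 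Conversely, every program produced from a popped state $n$ has $w$-value equal to $w(n)$, so the costs of the generated programs form exactly the sequence of $w(n)$ over popped states, which is nondecreasing by invariant (ii). That is precisely generation-time best-first order.

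What remains is to check that no program is skipped, i.e., that $n_p$ is in fact present in $Q$ by the time \bee\ would need it. Here I would use the shape of the cost-tuple space: the states $(1,\dots,1)$ are pushed at initialization, and every non-initial state has some entry exceeding $1$; decrementing that entry yields a parent state whose $w$-value is strictly smaller, again by Lemma~\ref{lemma:monotone_state} together with the uniqueness of the entries of $C$. Hence every state of $w$-value $c$ has a parent of $w$-value $< c$, which by the induction has already been popped and expanded, so $n_p$ was pushed into $Q$ (the duplicate check only suppresses a second copy, never the first); combined with (ii), $n_p$ is then popped no later than the end of the phase of cost $c$. Observational-equivalence pruning does not disturb any of this: a pruned program is never created, and an observationally equivalent program of cost at most $c$ was created earlier, so the order of generated programs stays nondecreasing. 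Finally I would remark that the same bookkeeping also shows exhaustiveness — every program is eventually generated, up to observational equivalence — which together with the ordering property is what the theorem asserts.
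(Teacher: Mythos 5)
Your proposal is correct and follows essentially the same route as the paper's proof: an induction over expansions in which the sortedness of $C$ (automatic for pre-generation functions, since costs are appended in expansion order) and Lemma~\ref{lemma:monotone_state} together force the state with the smallest $w$-value in $Q$ to be globally cheapest, with the ancestor/frontier argument ruling out a cheaper unexpanded state, and $w(p)=w(n)$ transferring the ordering from states to generated programs. Your write-up merely makes explicit some steps the paper leaves implicit (the exact program--state correspondence, the parent-in-$Q$ reachability argument, and the harmlessness of observational-equivalence pruning), so no substantive difference or gap.
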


\begin{proof}
We prove by induction in the iterations of search that \textsc{Bee Search} expands all cost-tuple states $n$ in best-first order with respect to $w$, which implies that the programs $p$ generated from $n$ are evaluated in best-first order because $w$ is additive and thus $w(p) = w(n)$ for all $p$. The base case is \textsc{Bee Search}'s first iteration when $C$ is initialized with the $w$-value of a terminal symbol with the smallest $w$ value. Since $w$ is additive, no state $n$ can have a value of $w$ lower than $C[1]$. The inductive hypothesis states that all cost-tuple states up to the $j$-th expansion (excluding the $j$-th expansion) are processed in best-first order with respect to $w$. Since $w$ is a pre-generation function, the cost values $w(n)$ are added to $C$ once a cost-tuple state $n$ is expanded, so $C$ must be sorted in increasing order prior to the $j$-th expansion. 

For the inductive step, we consider the $j$-th expansion. 
In the $j$-th expansion, \bee\ expands a cost-tuple state $n_1$ with the smallest $w$-value in $Q$. Let us suppose that there is another state $n_2$ that was not expanded before $n_1$ and $w(n_2) < w(n_1)$ (i.e., \bee\ would have to expand $n_2$ instead of $n_1$ to attain best-first ordering). 
Since $C$ is sorted from smallest to largest (inductive hypothesis) and $w(n) < w(n_2) < w(n_1)$ for any ancestor $n$ of $n_2$ (Lemma~\ref{lemma:monotone_state}), either $n_2$ or one of its ancestors $n$ would have the smallest $w$-value in $Q$ and not $n_1$, which is a contradiction, since $n_1$ has the smallest $w$-value in $Q$. Thus, the search expands the cost-tuple states in best-first order with respect to $w$, which implies that it generates programs in best-first order with respect to $w$ ($w(n) = w(p)$ for all programs $p$ generated from $n$). 
\end{proof}

\begin{theorem}
\textsc{Bee Search} generates programs in best-first order with respect to an additive and penalizing post-generation cost function $w$.
\label{theorem:post}
\end{theorem}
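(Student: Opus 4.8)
The plan is to mirror the induction in the proof of Theorem~\ref{theorem:pre}, but to account for the fact that with a post-generation function the final cost $w'(p)$ of a program is only revealed after the program is generated, so the set $C$ is no longer guaranteed to be in increasing order by construction — it must be actively maintained sorted (line~\ref{line:post} of Algorithm~\ref{alg:next}), and $Q$ must be re-heapified whenever an inserted $w'$-value is small enough to shift the meaning of an index appearing in some queued cost-tuple state (lines~\ref{line:reheapify_1}--\ref{line:reheapify_2}). First I would state the invariant to be carried through the induction: at the start of the $j$-th expansion, (i) $C$ is sorted from smallest to largest, (ii) $Q$ is a valid min-heap with respect to the current $w$-values of its cost-tuple states, and (iii) every program generated so far was generated (and evaluated) in best-first order with respect to $w'$. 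The base case is identical to Theorem~\ref{theorem:pre}: $C$ is initialized with the smallest terminal cost, and since $w$ is additive and \emph{penalizing} (so $w'(p) \geq w(p) \geq C[1]$ for every program, and in particular no program can be cheaper than the cheapest terminal), the first program evaluated is the globally cheapest one.

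For the inductive step I would argue in two parts. First, the \emph{bookkeeping} part: when \bee\ generates the programs represented by a popped state $n$ and computes their $w'$-values, inserting each $w'(p)$ into $C$ while keeping $C$ sorted preserves invariant (i) by construction; and the conditional heapify restores invariant (ii), because the only way a queued state's $w$-value can change is if a newly inserted $w'$-value lands at or below an index that appears in that state's tuple — which is exactly the condition $w'(p) < C[i]$ with $i$ the largest index occurring across the tuples in $Q$ — and in that case Heapify rebuilds the heap against the updated $w$-values. Here I would also note the key monotonicity fact that lets the induction close: because $w$ is penalizing, $w'(p) \geq w(p) = w(n)$, so the value just returned is never smaller than the $w$-value of the state it came from; combined with Lemma~\ref{lemma:monotone_state} (which applies once $C$ is sorted), the children $n'$ of $n$ that get pushed all satisfy $w(n') > w(n)$, and more importantly no program with cost strictly less than $w(n)$ can still be lurking unexpanded.

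Second, the \emph{best-first} part: suppose for contradiction that at the $j$-th expansion \bee\ pops a state $n_1$ and generates a program $p_1$ with $w'(p_1)$, but there exists a not-yet-generated program $p_2$ with $w'(p_2) < w'(p_1)$. The program $p_2$ is represented by some cost-tuple state $n_2$ whose $w$-value equals $w'(p_2)$ (for additive functions the state's $w$-value equals the common cost of the programs it represents, now measured with the $w'$-costs of the subprograms, since those have already been inserted into $C$). Either $n_2 \in Q$, or one of its ancestors is in $Q$; in the latter case Lemma~\ref{lemma:monotone_state} gives that ancestor a $w$-value $\le w(n_2) = w'(p_2) < w'(p_1) = w(n_1)$, and by invariant (ii) $Q$ is a correct min-heap, so \bee\ would have popped that cheaper state rather than $n_1$ — contradiction. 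This establishes that states, and hence the programs they spawn, are expanded/generated in nondecreasing $w'$-order. The main obstacle, and the step I would take the most care over, is the interaction between the dynamic insertions into $C$ and the indices stored in $Q$: I need to argue that at the moment $p_2$ (or $n_2$) becomes "available" — i.e., all of $n_2$'s subprogram costs are already present in $C$ — its index tuple really does evaluate to $w'(p_2)$ under the \emph{current} $C$, and that the heapify condition is triggered in exactly the cases where this re-indexing would otherwise corrupt the ordering of $Q$. Making precise that "a state cannot be prematurely popped with a stale (too-small) $w$-value, nor left in $Q$ with a stale (too-large) one" is where the penalizing hypothesis and the heapify trigger condition both do real work, and I would spell that lemma out separately before invoking it in the contradiction argument.
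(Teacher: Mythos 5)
Your skeleton (induction over expansions, keeping $C$ sorted and $Q$ a valid heap, then a contradiction via Lemma~\ref{lemma:monotone_state}) matches the paper's, but there are two genuine gaps. First, your contradiction is aimed at the wrong ordering: you posit a not-yet-generated $p_2$ with $w'(p_2) < w'(p_1)$ and claim that the cost-tuple state $n_2$ representing $p_2$ has $w$-value equal to $w'(p_2)$. A state's value equals $w(p)$ for the programs it represents, never $w'(p)$ --- the penalty is applied only after generation --- and the theorem asserts best-first order with respect to $w$, not $w'$. Indeed, nondecreasing $w'$-order is false in general for penalizing functions: a program generated early with small $w$ can receive a large penalty and end up with a larger $w'$ than a program generated later. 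Since the heap orders states by their $w$-values, your supposed cheaper $p_2$ (cheaper in $w'$) need not correspond to any state with priority below $w(n_1)$, so the contradiction does not close as stated.

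Second, the step you explicitly defer (``a state cannot be prematurely popped with a stale $w$-value \ldots I would spell that lemma out separately'') is precisely the substance of the paper's proof, and you never supply the argument. The paper proves that at the moment a state $n_1$ is popped, every entry $C[i]$ with $i$ in $n_1$ already has its final, minimum value: if not, there would be an unexpanded state $n_2$ that later generates a program $p$ with $w'(p) < C[i]$; the penalizing property gives $w'(p) \geq w(n_2)$, and additivity gives $w'(p) < w(n_1)$ because $w'(p)$ would be one of the summands composing $w(n_1)$; hence $w(n_2) < w(n_1)$, so by the heap property, the inductive hypothesis and Lemma~\ref{lemma:monotone_state}, $n_2$ and its ancestors would have been expanded before $n_1$ --- a contradiction. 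Only with this finality of the referenced $C$-entries can Lemma~\ref{lemma:monotone_state} be invoked to conclude best-first expansion (and hence generation, since $w(p)=w(n)$). Your proposal names this obstacle and correctly senses that the penalizing hypothesis and the re-heapify trigger are where the work lies, but without the above argument the induction's key invariant is assumed rather than proved.
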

\begin{proof}
During a \bee\ search with penalizing post-generation cost functions, the values of $C[i]$ for a fixed $i$ can change across iterations due to the penalization term of $w'$ and the sorting \bee\ performs. We prove by induction in the iterations of the search that, at the time of expansion of a cost-tuple state $n = (i_1, i_2, \cdots, i_k)$, $C[i]$ has its minimum value for all indexes $i$ in $n$. By proving that the $C[i]$-entries have their minimum values, we show that the $C[i]$-values cannot change later in search for all $i$ in $n$. Since \bee\ maintains $C$ sorted and the $C[i]$-values cannot change, we can use Lemma~\ref{lemma:monotone_state} to show that the cost-tuple expansions happen in best-first order with respect to $w$, which implies a best-first order for the generation of programs as $w(n) = w(p)$ for all $p$ generated from $n$. In our proof we consider cost-tuple states representing non-terminal rules since states representing terminal rules do not generate children, and thus the priority queue alone guarantees the best-first ordering for such states. 
The base case is the first cost-tuple state $n = (1, \cdots, 1)$ expanded. Since $C[1]$ contains the cost of the terminal symbol with the smallest $w$-value and the $w$ function is additive, no other program can have a smaller $w$-value. The inductive hypothesis states that the $C[i]$-values have their minimum value and are sorted for all indexes $i$ of states \textsc{Bee Search} expands up to the $j$-th expansion. 

Let $n_1$ be the cost-tuple state with the smallest $w$-value in $Q$ in the $j$-th expansion. 
Let us suppose that, at a given iteration, due to the order in which \bee\ inserts cost values in $C$, there is an $i$ in $n_1$ for which $C[i]$ does not have its minimum value. That is, there exists a cost-tuple state $n_2$ that was not expanded yet and that will generate a program $p$ whose $w'(p)$-value will be assigned to $C[i]$, and before this assignment happens, we have $C[i] > w'(p)$. 
Penalizing cost functions guarantee that the value of a program cannot be smaller than the value of the cost-tuple state that generated the program, i.e., $w'(p) \geq w(n_2)$, for a $p$ generated from $n_2$.  
Since $w(n_1)$ is given by the sum of costs of its subprograms ($w$ is additive) and $w'(p)$ is one of the terms of the sum that results in $w(n_1)$, then
$w'(p) < w(n_1)$ and thus $w(n_2) < w(n_1)$. Since \bee\ always maintains $Q$ with a valid heap structure and the cost function increases monotonically for the cost-tuple states (inductive hypothesis and Lemma~\ref{lemma:monotone_state}), $n_2$ and all its ancestors must have been expanded prior to $n_1$ and the $C[i]$-values for all $i$ in $n_1$ must be at their minimum value when $n_1$ is expanded. 

Since the $C[i]$-values are sorted and final for all $i$ in the states \bee\ expands, Lemma~\ref{lemma:monotone_state} gives us that the cost-tuple states are expanded in best-first order according to $w$, which implies that the programs are generated in best-first order according to $w$. 
\end{proof} 

The next theorem shows that \bee\ search is complete, i.e., if there is a solution is the space of programs $\mathcal{G}$ defines, \bee\ will eventually find it. 

\begin{property} Given enough memory and time, if a solution program $p$ exists in the search space defined by the grammar $\mathcal{G}$, then \bee\ will find it.
\begin{proof} 
\bee\ considers all possible cost-tuple states in the search---it does not leave any state unchecked. Since every program that can be derived from $\mathcal{G}$ is mapped to a cost-tuple state, \bee\ considers all possible programs during search. 
\end{proof}
\label{property:completeness}
\end{property}

We begin by discussing the correctness of \bee\ by showing that all indexes stored in cost-tuple states refer to valid positions in the cost set $C$, despite $C$ being dynamically constructed during search. This means that \bee\ never accesses an index that is outside the range of $[1, \vert C \vert]$, and thus it does not throw a runtime error for that reason.

\begin{property}
 For monotonically increasing cost functions, the indexes $i_j$ in the cost-tuples $(i_1, i_2, \cdots, i_{k})$ generated during the \textsc{Bee Search} search are valid, i.e., $i_j$ in $[1, |C|]$.
\label{property:valid_indexes}
\end{property}
\begin{proof}
The proof is by induction in the iterations of search. $C$ is initialized with the cost of the cheapest terminal symbol, so all tuples $(1, \cdots, 1)$ refer to a valid index. The inductive hypothesis is that prior to the $j$-th iteration of \textsc{Bee Search} all indexes $i$ in all tuples $(i_1, i_2, \cdots, i_{k})$ generated thus far in search are valid. In the $j$-th iteration of \textsc{Bee Search} the cost-tuple state $n$ is to be expanded and, by the inductive hypothesis, all its indexes are valid, including the largest index, denoted $i_{\textsc{max}}$. Since all indexes are valid, we know that $|C| \geq i_{\textsc{max}}$. If $|C| > i_{\textsc{max}}$, then all children of $n$ are trivially valid because each index in $n$ grows at most $1$ in $n$'s children. If $|C| = i_{\textsc{max}}$, all children are also valid because when $n$ is expanded, its cost is added to $C$, thus increasing the size of $C$ by $1$. The cost $C[i_{\textsc{max}}]$ is the largest in $C$ before $n$ is expanded. Since the cost function is increasing monotonically, $w(n) > C[i_{\textsc{max}}]$ (or $w'(p) > C[i_{\textsc{max}}]$, where $p$ is a program generated from $n$, for post-generation cost functions). Thus, when $n$ is expanded, the size of $C$ increases by $1$ and all children of $n$ have valid indexes. 
\end{proof}
The following theorem states that the program \bee\ returns is correct, i.e., it solves the program synthesis task. 
\begin{property}
If \bee\ returns a solution program $p$, then $p$ is correct as it solves the program synthesis task: $p$ satisfies the semantic and syntactic constraints of the task.
\begin{proof}
It is trivial to establish that \bee\ is correct, the program $p$ it returns must satisfy the semantic and syntactic constraints of the synthesis problem, as \bee checks for these constraints in line~\ref{line:bee_solution_check} of Algorithm~\ref{alg:bbus} and only returns the solution program $p$ (line~\ref{line:return_p}) if the constraints are satisfied. 
\end{proof}
\label{theorem:correctness}
\end{property}

\section{Empirical Results}\label{section:results}

We evaluate \bee\ on three benchmark problems set: (i) $205$ string manipulation problems---$108$ programming by example string problems from $2017$ SyGuS competition, $37$ real problems faced by people and posted on StackOverflow, and $60$ spreadsheet problems from Exceljet~\cite{Woosuk2018} (we call this benchmark the SyGuS benchmark), (ii) $38$ handcrafted string manipulation problems from \textsc{Bustle}'s paper~\cite{bustle}, and (iii) $27$ bit-vector problems from the Hacker's Delight book~\shortcite{hackersdelightbook}. 
We have implemented \textsc{Probe}, \textsc{Bustle}, \textsc{Brute} with $w_\textsc{Probe}$ and $w_\textsc{Bustle}$, \textsc{Bee Search} with $w_\textsc{Probe}$, $w_\textsc{Bustle}$, and $w_\textsc{U}$, \textsc{Heap Search} with $w_\textsc{Probe}$, and \bus. 

\textsc{Probe} uses an online learning scheme in which the probabilities of the PCFG are updated as more input-output examples are solved. \textsc{Probe} uses a parameter to determine when to update the probabilities; we tested the values of $d = \{1, 2, \cdots, 7\}$ in all algorithms using $w_\probe$, and report the results for the value that performed best for each algorithm. \textsc{Bustle} uses a neural network with property signatures~\shortcite{Odena2020Learning}. 
Property signatures are domain dependent and \citeauthor{bustle}~\citeyear{bustle} described properties only for the string manipulation domain, so we limit the experiments with techniques using $w_{\textsc{Bustle}}$ and $w_{\textsc{u}}$ to string manipulation problems only. However, we evaluate the approaches using $w_\textsc{Probe}$ on both string and bit-vector problems. We report the average and standard deviation over $5$ independent runs of all results involving the neural network of \textsc{Bustle}. \bus{} is deterministic, so is \probe's learning scheme, hence we report the results of a single run for them.

We are interested in comparing \bee\ using $w_{\textsc{Probe}}$ with all other algorithms using $w_{\textsc{Probe}}$ (\probe, \brute, and \heapsearch) and \textsc{Bee Search} using $w_{\textsc{Bustle}}$ with all other algorithms using $w_{\textsc{Bustle}}$ (\bustle\ and \brute). We are also interested in comparing all algorithms performing best-first search: \bee{}, \brute{}, and \heapsearch{}. Lastly, we are interested in comparing \textsc{Bee Search} using $w_{\textsc{u}}$ with all other algorithms. We performed two sets of experiments: one with a smaller DSL and another with a larger one (see Appendix \ref{appendix:DSL} for the DSLs). 
The larger DSLs are defined as follows. For each problem in the string domain of SyGuS benchmarks, instead of using only the literals given in the problem's specification, we use literals from all problems in the set and all letters in the English alphabet. For the $38$ string manipulation problems, in addition to the aforementioned literals, $50$ randomly generated strings of length selected uniformly at random from the range $[2, 7]$, and $10$ integers selected uniformly at random from the range $[10, 100]$ are also added. For the bit-vector domain, we used $250$ literals for all problems, obtained by taking the union of literals from all problems in the set and adding other $236$ random literals. The goal of experimenting with larger DSLs is to evaluate the algorithms in larger search spaces. Larger DSLs also simulate scenarios in which one does not have access to the set of literals required to solve a problem, and more literals can increase the chances of defining spaces that contain a solution. All experiments were run on $2.4$ GHz CPUs with $16$ GB of RAM. The algorithms had $120$ minutes to solve each task. 

Figures~\ref{fig:baseline-dsl-strings-results} and \ref{fig:38-strings-results} present the results for the string manipulation tasks from $205$ SyGuS competition and $38$ handcrafted benchmarks, respectively. Figure~\ref{fig:bit-vector-results} shows the results of bit-vector domain. In each figure, the two plots at the top present the results for the smaller DSL, while the plots at the bottom present the results for the larger DSL. We present the number of problems solved by the number of programs evaluated and by the running time in seconds. While running time offers a fair evaluation of the different algorithms, the number of evaluations is machine- and implementation-independent, which allows it to be more easily used by others. 
The plots were generated by sorting the solved instances according to each algorithm's running time (or number of evaluations); the y-axis shows the total number of problems solved and the x-axis the sum of running times (or sum of number of evaluations). 

\subsection{Discussion}

\begin{figure*}[!t]
    \centering
    \begin{subfigure}[b]{0.495\textwidth}
        \includegraphics[width=\linewidth]{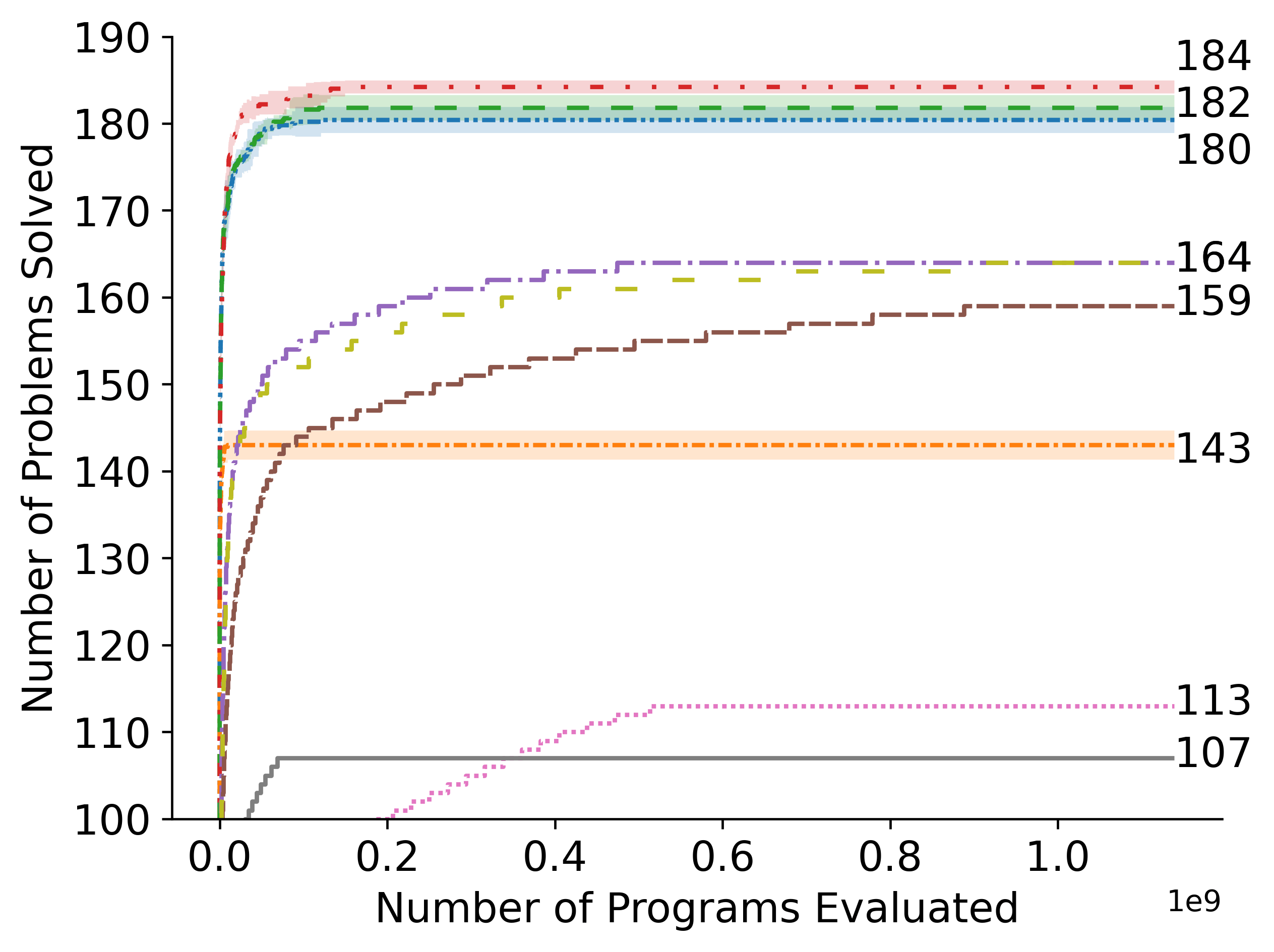}
    \end{subfigure}  \hfil
    \begin{subfigure}[b]{0.495\textwidth}
        \includegraphics[width=\linewidth]{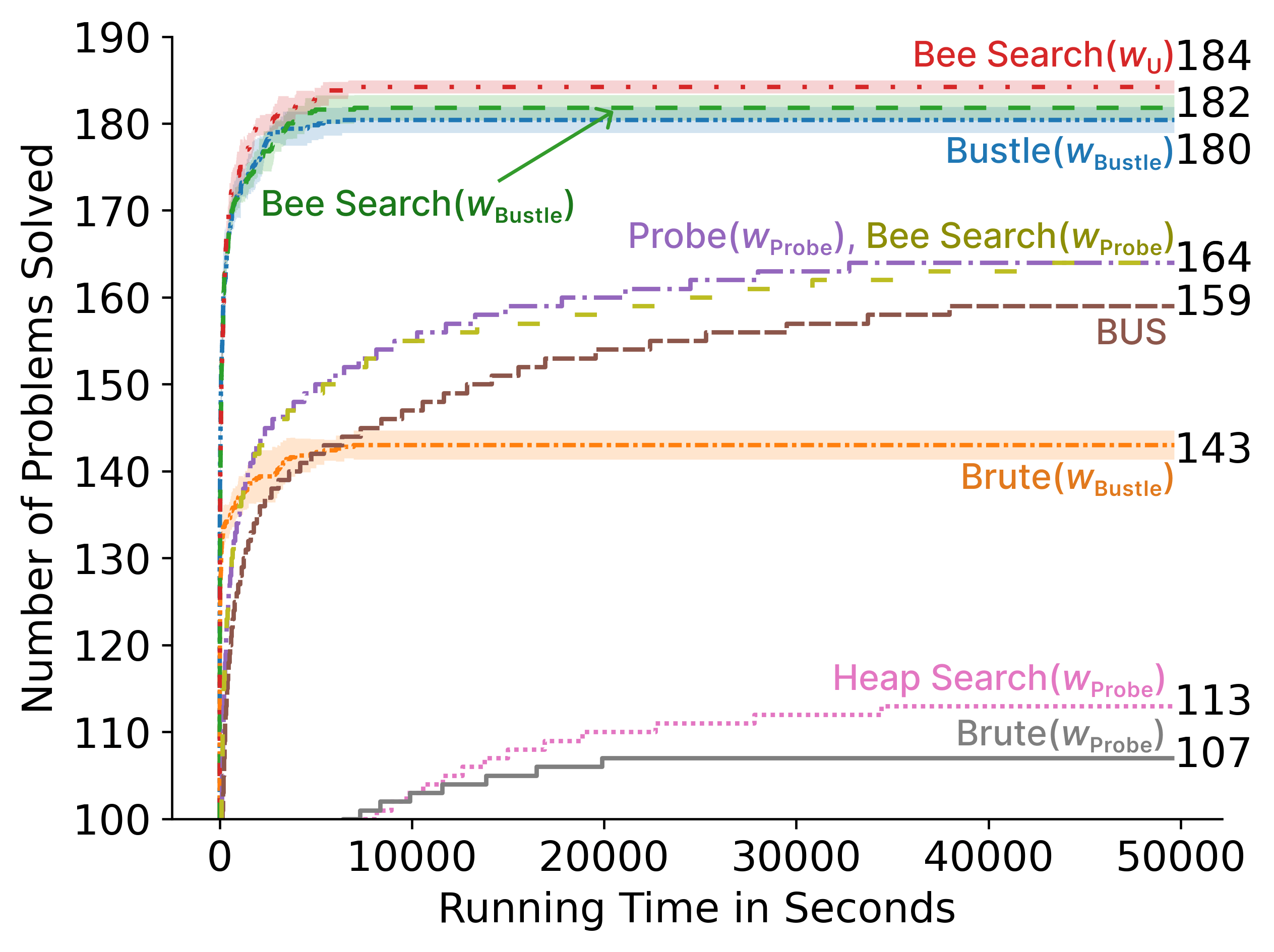}
    \end{subfigure}

    \medskip

    \begin{subfigure}[b]{0.495\textwidth}
        \includegraphics[width=\linewidth]{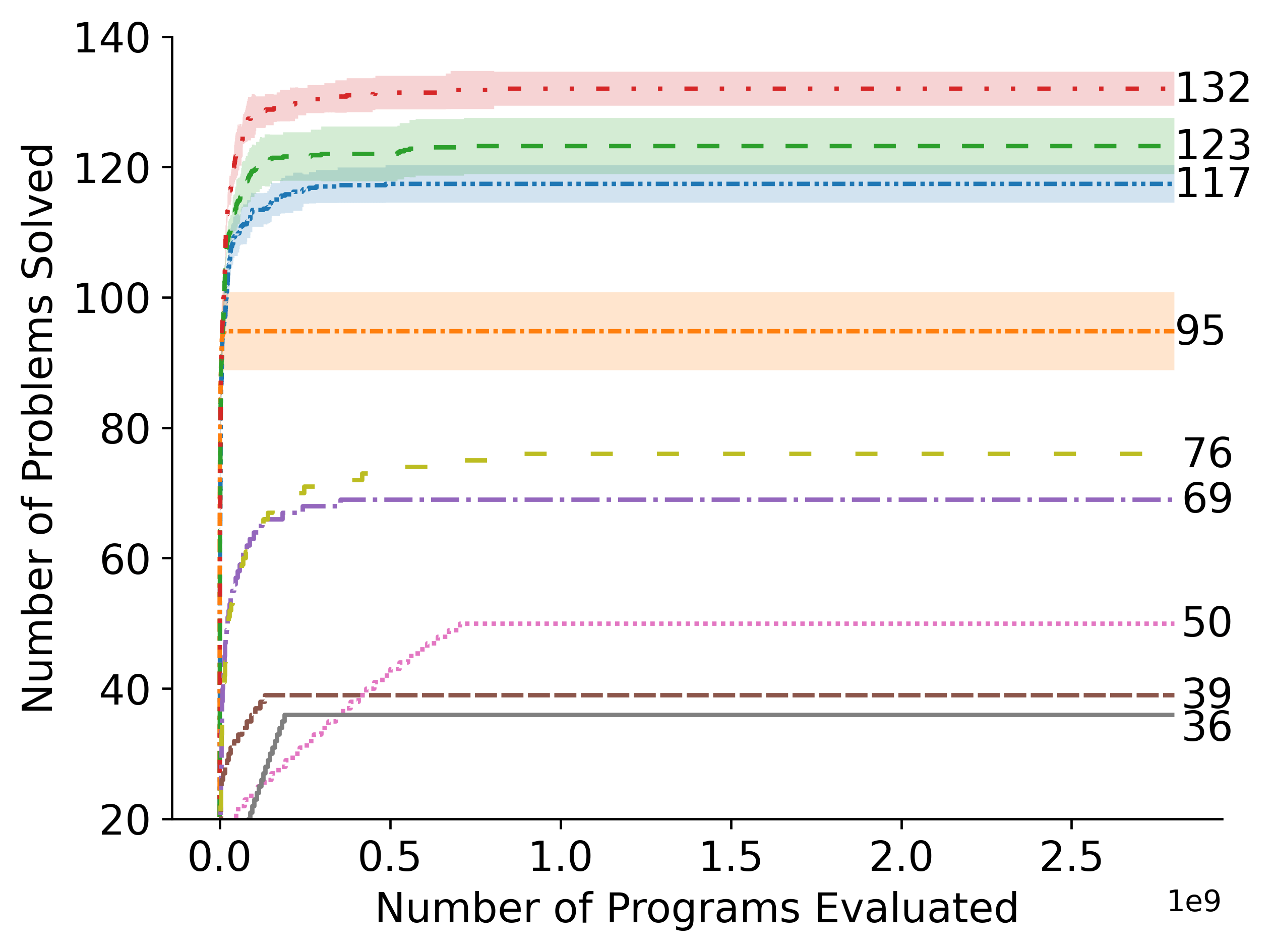}
    \end{subfigure}  \hfil
    \begin{subfigure}[b]{0.495\textwidth}
        \includegraphics[width=\linewidth]{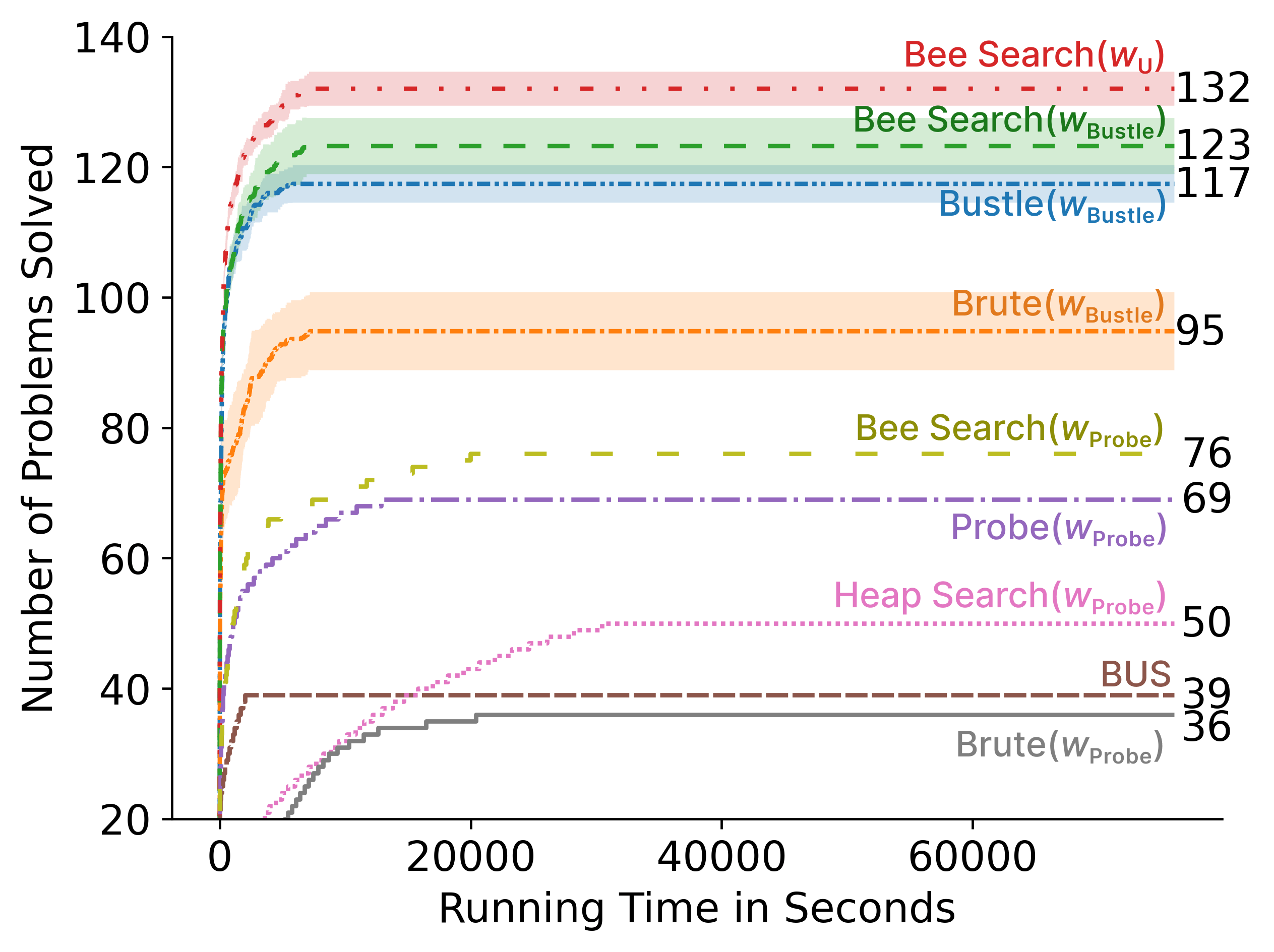}
    \end{subfigure}
\caption{Number of problems solved per number of evaluations and running time for 205 string domain problems of SyGuS. The two plots at the top show the results for the smaller DSL; the plots at the bottom show the results for the larger DSL.}
\label{fig:baseline-dsl-strings-results}
\end{figure*}

\begin{figure*}[t]
    \centering
    \begin{subfigure}[b]{0.495\textwidth}
        \includegraphics[width=\linewidth]{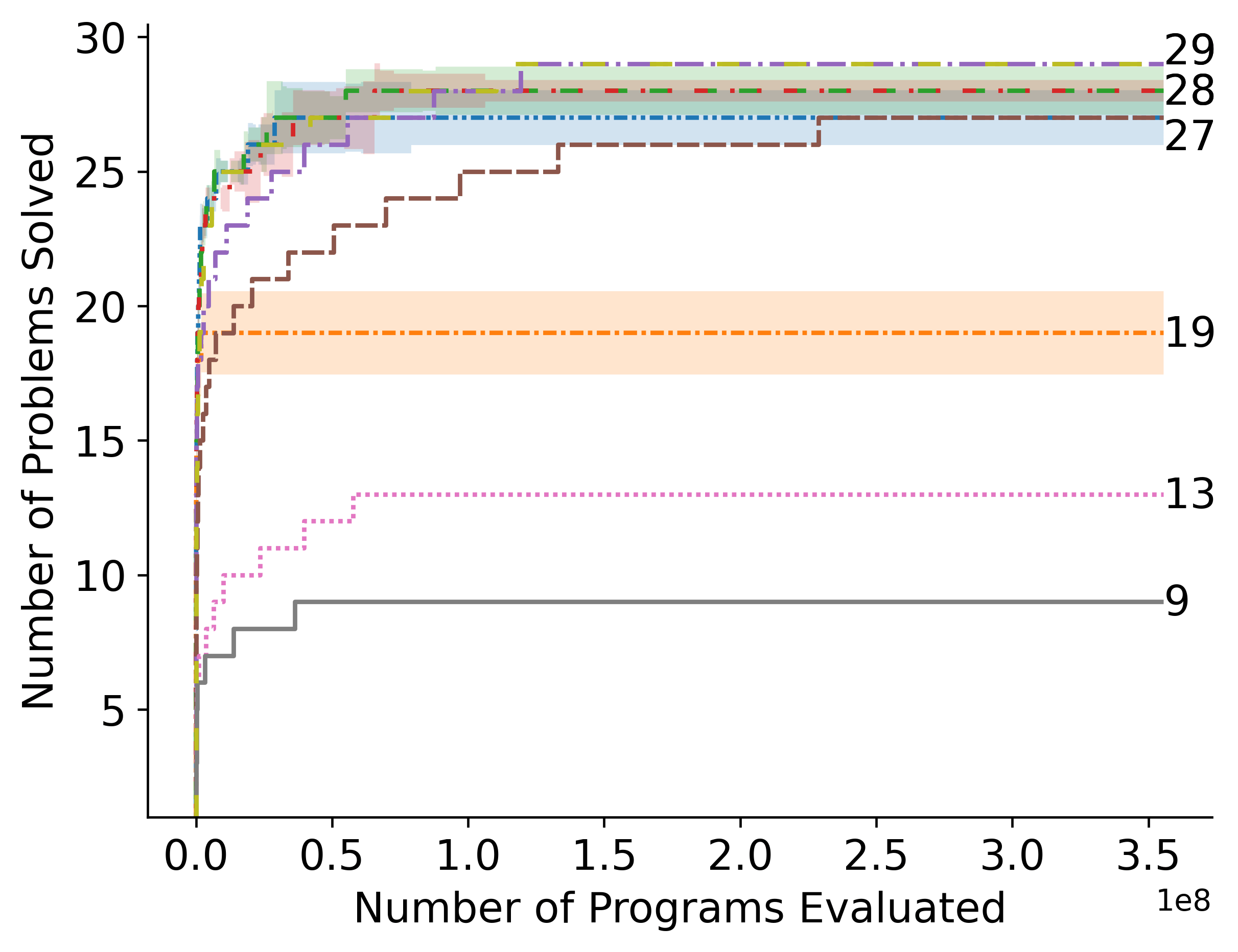}
    \end{subfigure}  \hfil
    \begin{subfigure}[b]{0.495\textwidth}
        \includegraphics[width=\linewidth]{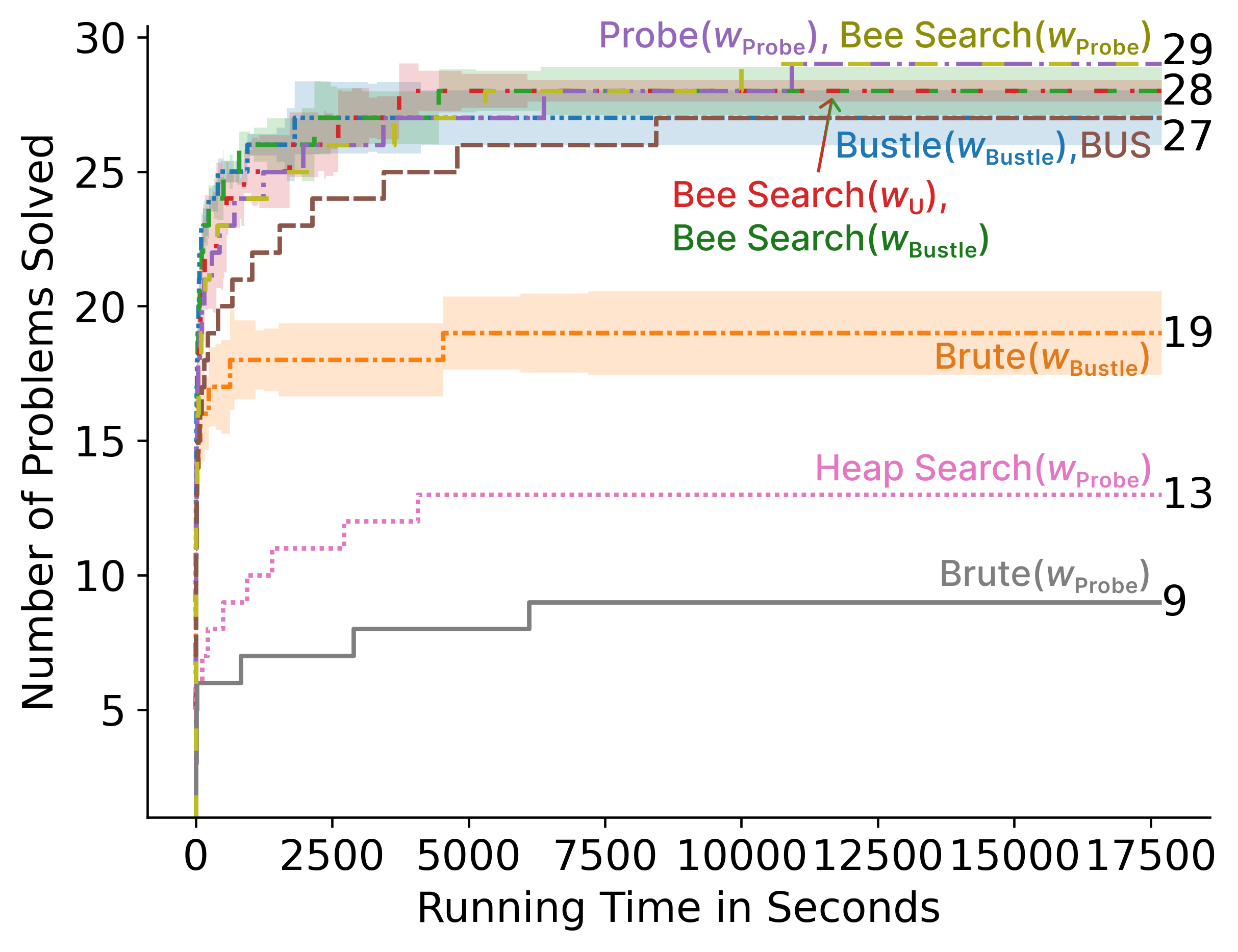}
    \end{subfigure}

    \medskip


    \begin{subfigure}[b]{0.495\textwidth}
        \includegraphics[width=\linewidth]{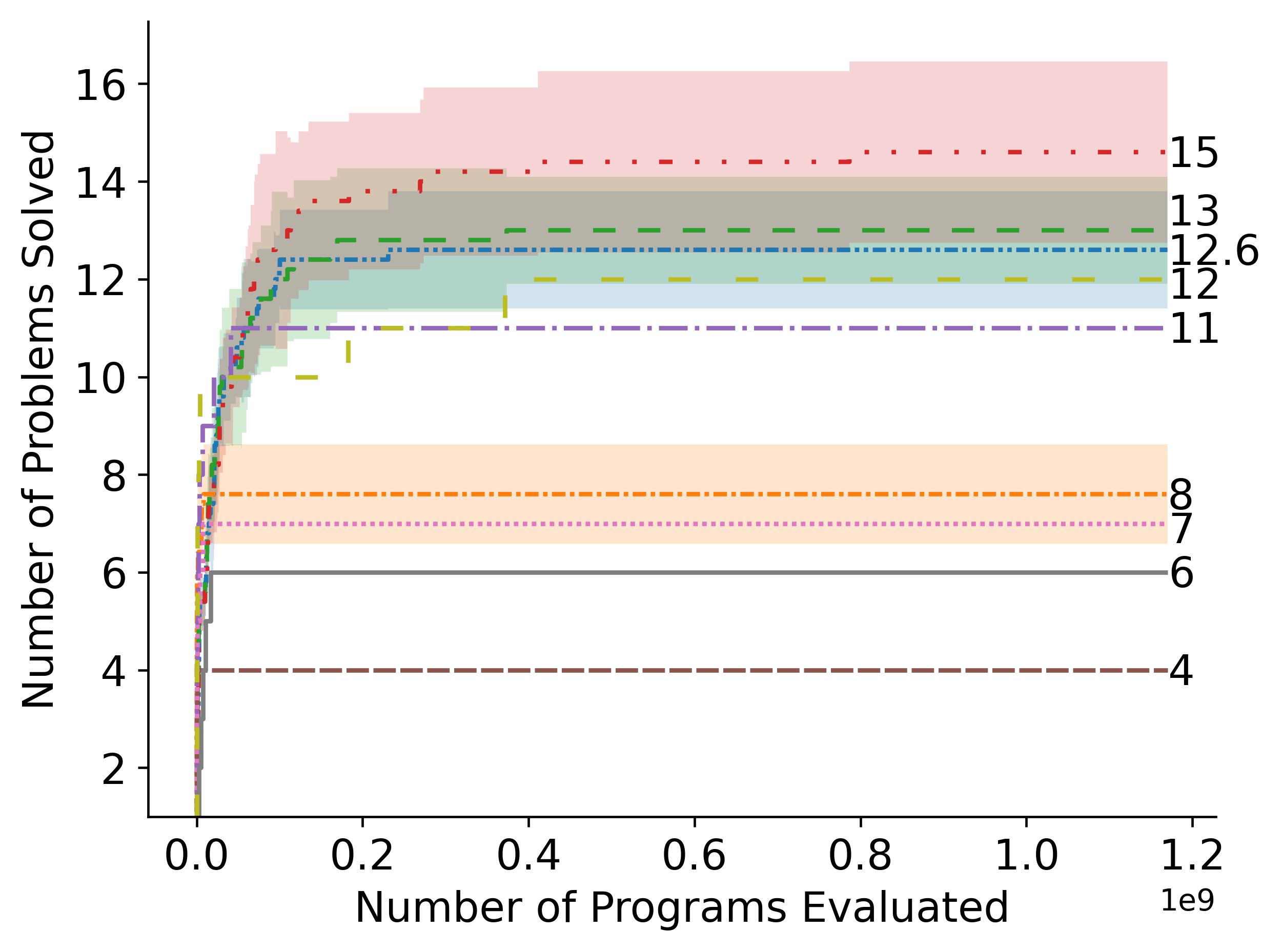}
    \end{subfigure}  \hfil
    \begin{subfigure}[b]{0.495\textwidth}
        \includegraphics[width=\linewidth]{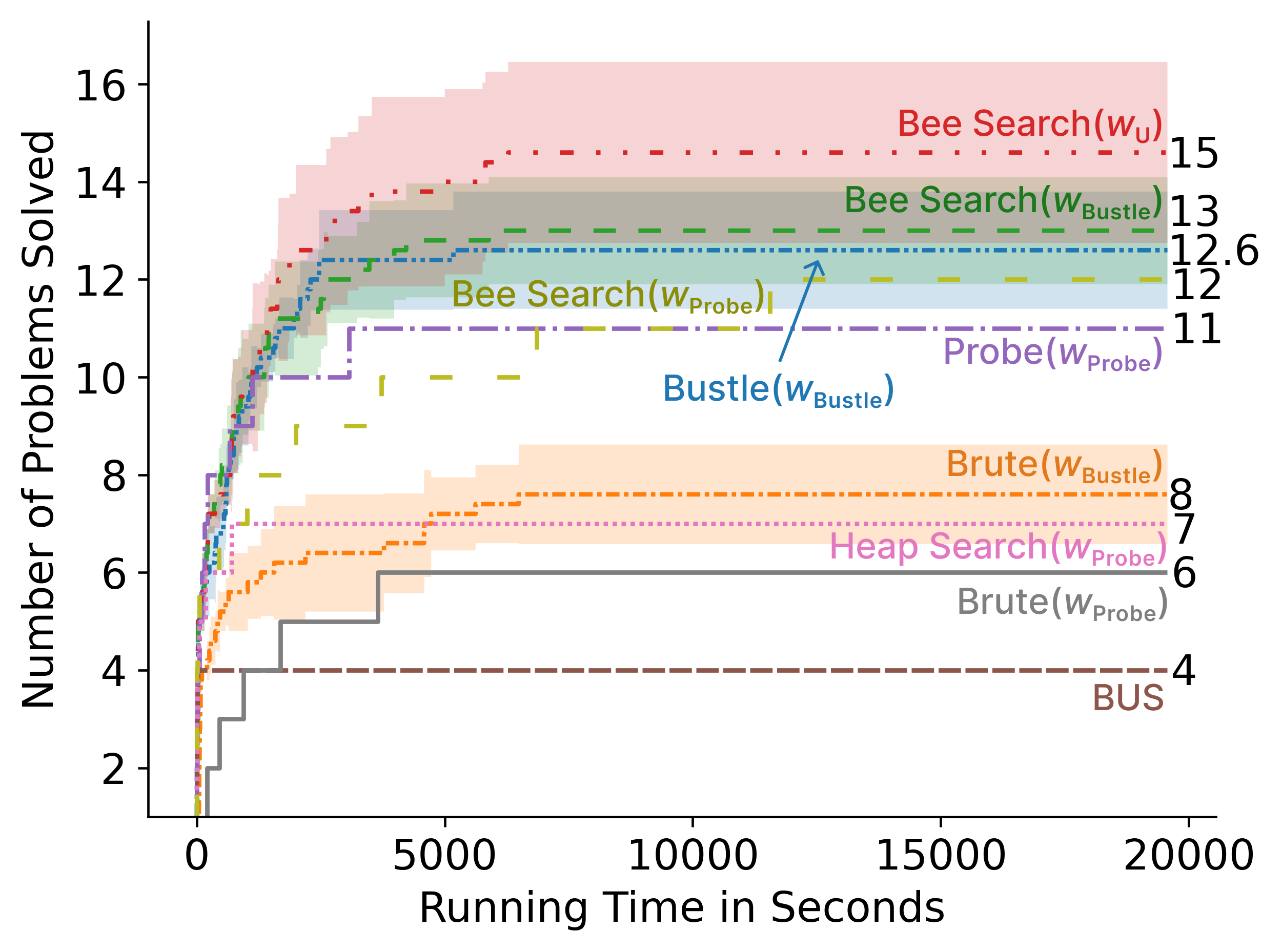}
    \end{subfigure}
    
\caption{Number of problems solved per number of evaluations and running time for the handcrafted 38 strings benchmarks by \citeauthor{bustle}. The two plots at the top show the results for the smaller DSL; the ones at the bottom show the results for the larger DSL.}
\label{fig:38-strings-results}
\end{figure*}

\begin{figure*}[htb]
\centering
\begin{subfigure}[b]{0.495\textwidth}
              \includegraphics[width=\linewidth]{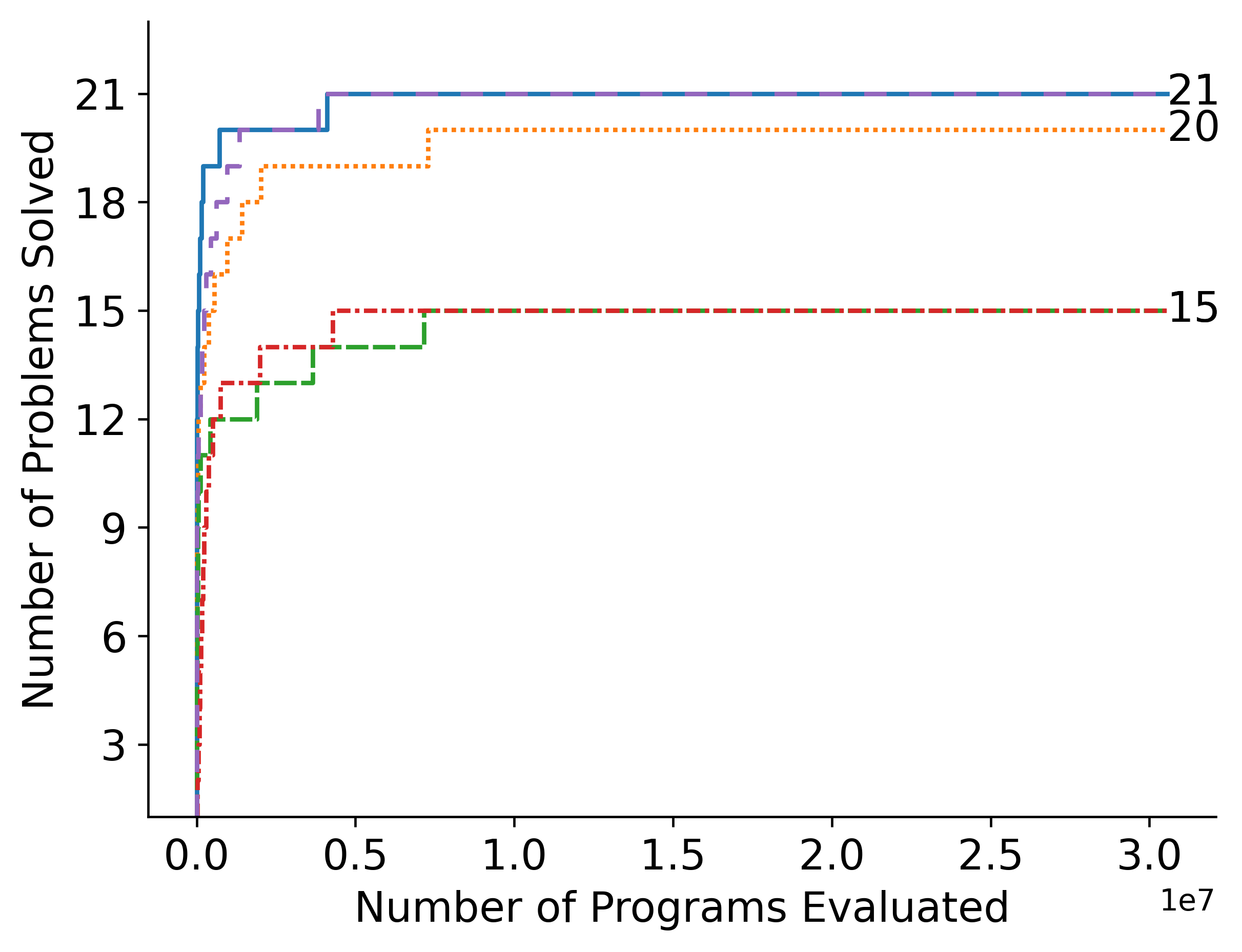}
\end{subfigure}  \hfil
       \begin{subfigure}[b]{0.495\textwidth}
              \includegraphics[width=\linewidth]{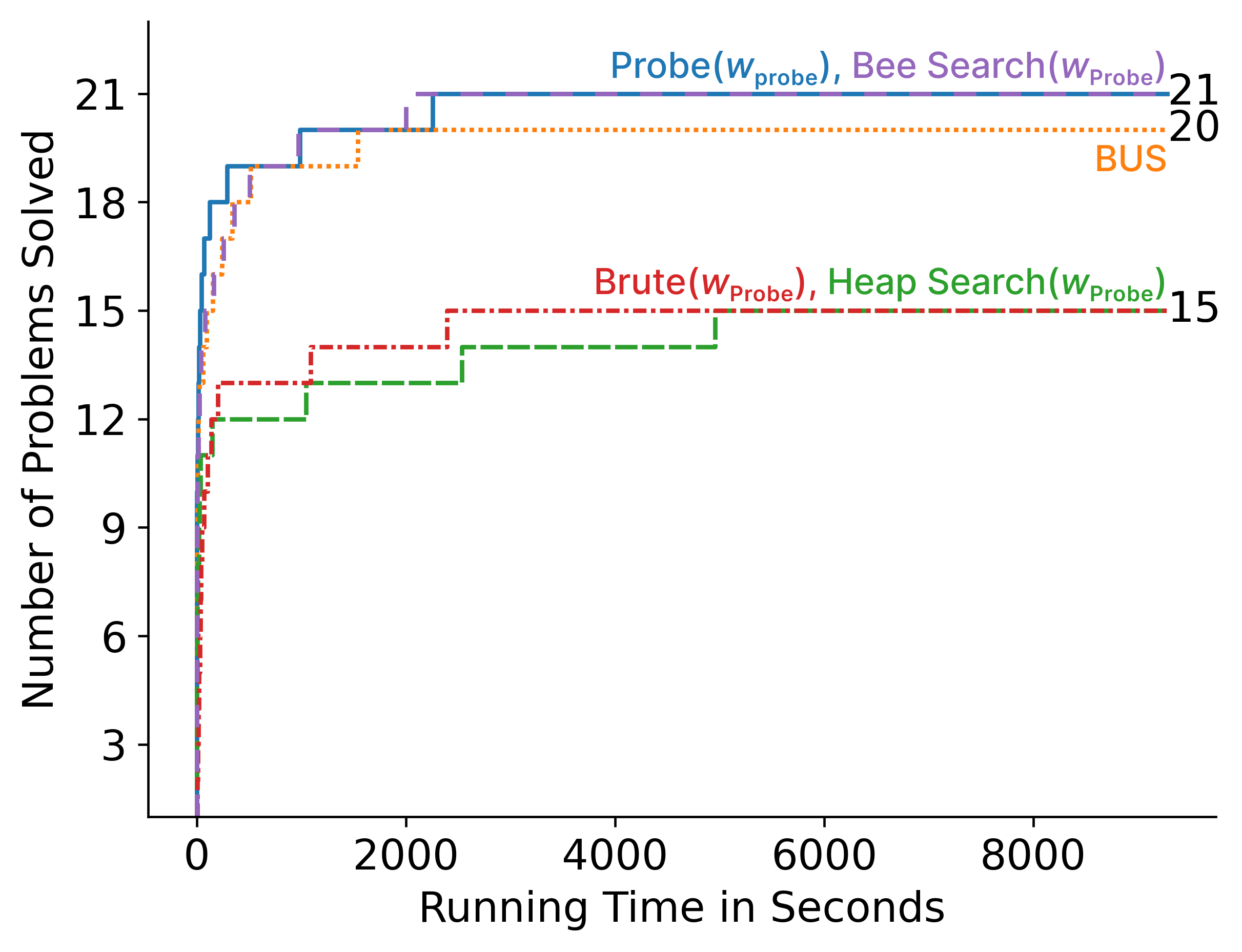}
\end{subfigure}
\medskip


\begin{subfigure}[b]{0.495\textwidth}
       \includegraphics[width=\linewidth]{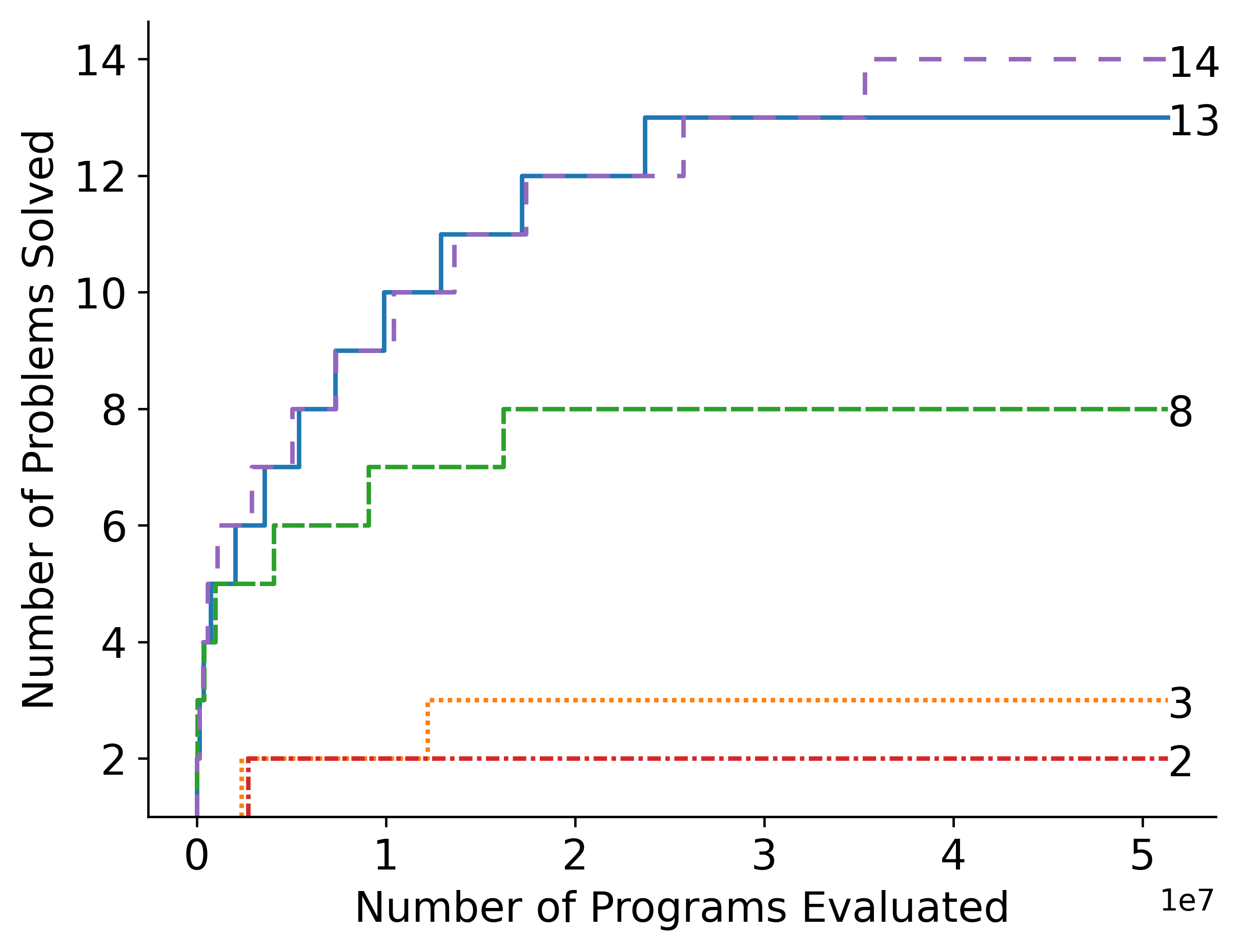}
\end{subfigure}  \hfil
\begin{subfigure}[b]{0.495\textwidth}
\includegraphics[width=\linewidth]{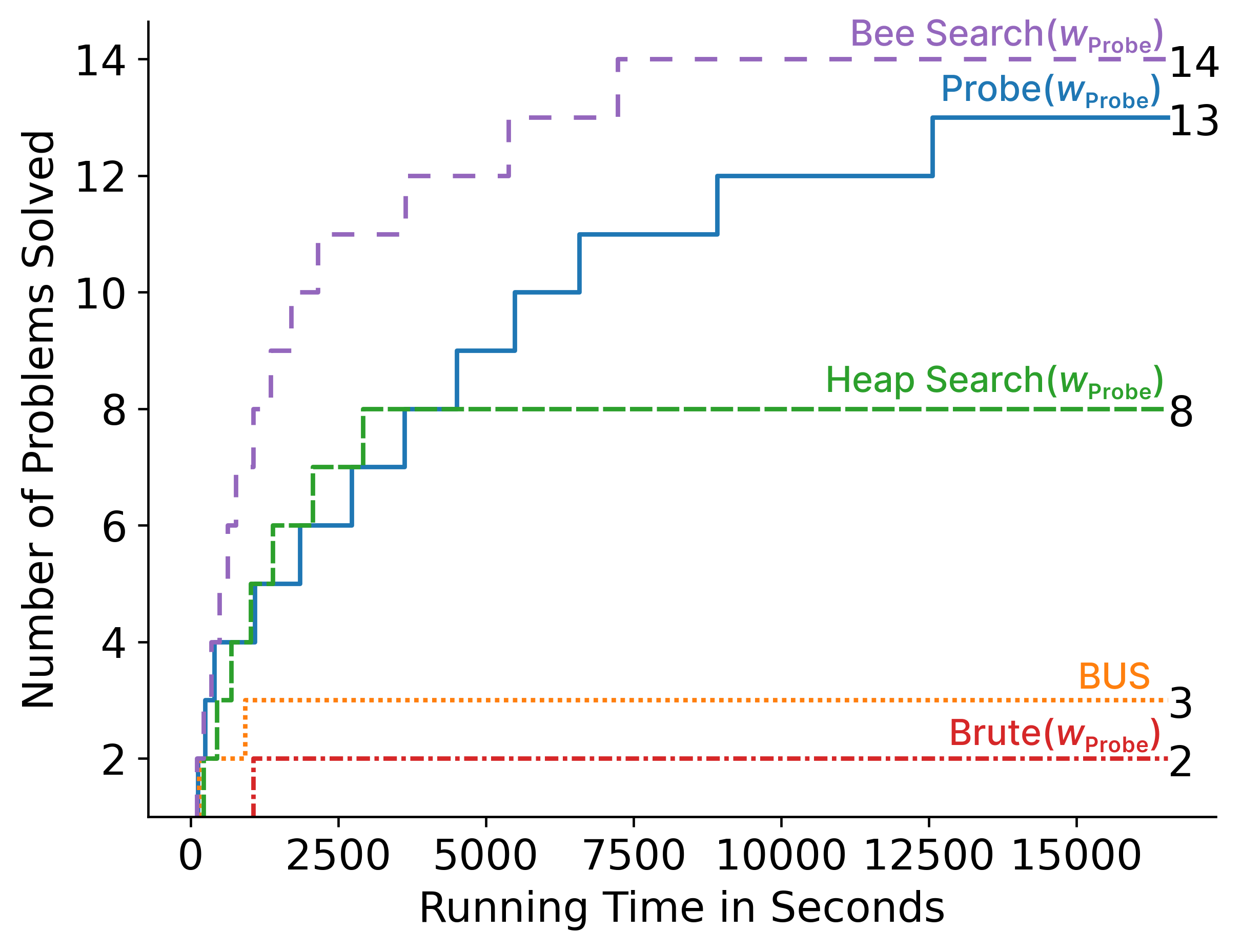}
\end{subfigure}

\caption{Number of problems solved per number of evaluations and running time for bit-vector domain. The two plots at the top show the results for the smaller DSL; the ones at the bottom show the results for the larger DSL.}
\label{fig:bit-vector-results}
\end{figure*}




Our discussion is divided by our key findings. 

\subsection*{\bee\ is not worse and often superior to others for a given cost function}

For a given cost function, \bee\ never performs worse in terms of the number of tasks solved, and often performs better than the other algorithms. For the SyGuS benchmark (Figure~\ref{fig:baseline-dsl-strings-results}), \bee\ with $w_{\bustle}$ solves 182 and 123 tasks for the smaller and larger DSL, respectively, while \bustle\ solves 180 and 117. \brute\ solves only 143 and 95 tasks with the same function. Similarly, \bee\ with $w_{\probe}$ is never worse than the other algorithms using $w_{\probe}$: it solves the same number of problems \probe{} solves for the smaller DSL and outperforms all algorithms in the larger DSL. We observe similar results in the $38$ tasks (Figure~\ref{fig:38-strings-results}) and in the bit-vector tasks (Figure~\ref{fig:bit-vector-results}). 

\subsection*{\bee\ with either $w_{\textsc{u}}$ or $w_{\probe}$ performs best in the evaluated domains} 

For the SyGuS benchmark (Figure~\ref{fig:baseline-dsl-strings-results}), \bee\ with $w_{\textsc{u}}$ solves more tasks than any other algorithm: 184 with the smaller DSL and 132 with the larger DSL. The second best algorithm in this domain is \bustle, which solves 180 tasks with the smaller DSL and 117 tasks with the larger DSL. Although the difference in terms of the number of tasks solved may seem small, it is substantial given that the tasks \bee\ solves and \bustle\ fails to solve are hard. 
For the $38$ handcrafted string tasks (Figure~\ref{fig:38-strings-results}), \probe{} and \bee\ with $w_{\probe}$ solve the largest number of tasks with the smaller DSL (29 tasks), while \bee\ with $w_{\bustle}$ and $w_{\textsc{u}}$ solve 28 tasks. \bustle\ comes next with 27 tasks solved. For the larger DSL, \bee\ with $w_\textsc{U}$ solves the largest number of tasks, 15, and is followed by \bee\ with $w_{\bustle}$ with 13; \bustle\ solves 12.6. A similar pattern is observed in the bit-vector domain (Figure~\ref{fig:bit-vector-results}), where \probe\ and \bee\ with $w_{\probe}$ solve 21 tasks with the smaller DSL. For the larger DSL, \bee\ with $w_{\probe}$ solves more problems than all other methods. The second best performing algorithm in bit-vector is \probe, with 13 problems solved, while \heapsearch{} is the third best algorithm with 8 problems solved.

\subsection*{$w_{\textsc{u}}$ performs better than $w_{\textsc{Bustle}}$}

\textsc{Bee Search} with $w_{\textsc{u}}$ outperforms all systems tested with $w_{\textsc{Bustle}}$, the other neural-based function, in the two evaluated domains: SyGuS benchmark and 38 string problems.

\subsection*{\bee\ overcomes the weaknesses of previous algorithms based on \bus}

These results suggest that \bee's best-first scheme is able to better use the information cost functions provide to the search than the ``truncation-based'' algorithms \probe\ and \bustle. The results also suggest that \bee's scheme of searching in the cost-tuple space is effective as it outperforms \brute{}  and \heapsearch{} by a large margin in all domains. \brute{} suffers from the fact that it generates a large number of programs that are never evaluated in search, which increases the algorithm's memory and time requirements. \bee\ does not suffer from this problem because its best-first search is with respect to program generation. \bee\ generates cost-tuple states that are never expanded, but the number of such states is much smaller than the number of programs \brute{} generates and are not evaluated. This is because the cost-tuple space is an abstraction of the original program space, where many programs are mapped to the same cost-tuple space. We conjecture that \heapsearch{} performs poorly in our experiments because it is unable to perform observational equivalence. In the SyGuS benchmark, even \bus substantially outperforms both \brute{} and \heapsearch{}. 

\subsection{Limitations of Evaluation}

In the context of ILP, \brute{} uses Answer Set Programming constraints to reduce the branching factor of search. The \brute{} version we evaluated in this paper is only an approximation of the original algorithm, as it is not clear how to adapt to Inductive Program Synthesis all the search enhancements developed in the context of ILP. Similarly, \heapsearch{} was originally evaluated in the context of parallel programming. In this paper, we evaluated only the sequential version of all algorithms. 

In some of our experiments, we observed that \bee\ performs as well as truncation-based algorithms (e.g., \probe\ in the bit-vector domain with the smaller DSL). We conjecture that these results can be explained by the nature of the cost function. For example, if the cost function does not provide helpful information for guiding the search, then both the exact and the truncated cost values will not be helpful for guiding the search. As another example, if a cost function is coarse-grained (e.g., all cost values are integers), then \bustle, \probe, and \bee will receive the same search signal to guide the search.  

\section{Related Work}\label{section:related-work}

The synthesis of programs has been studied for many years, starting with~\shortciteA{1969:PROW,1976:Smith}, and \shortciteA{Summers77}. It was used to solve tasks such as synthesis of database transactions~\shortcite{1990:DBTransactions,1993:DBTransactionsSynthesis}, system verification~\shortcite{1989:verification0,1990:verification1}, logic programming~\shortcite{1990:LogicProgrammingAndSynthesis,1994:LogicProgramSynthesis}, manipulation of bits~\shortcite{2005LezamaSketching,2009:gulwaniBitVectors}, strings~\shortcite{gulwani2011automating}, numbers~\shortcite{2012:NumberTransfromation}, synthesis of fault-tolerant circuits~\shortcite{2016:fsaCircuits}, and programmatic policies~\shortcite{VermaMSKC18,BastaniPS18,Marino2021}. 

Similarly to the application domains, there is also a large diversity of strategies for solving synthesis tasks. In constraint satisfaction algorithms, one transforms the synthesis task into a constraint satisfaction problem that can be solved with off-the-shelf SAT solvers~\cite{SolarLezama2009TheSA}. Stochastic search algorithms such as Simulated Annealing~\cite{programsynthesis_sa} and genetic algorithms~\cite{koza:book92} have also been applied to solve synthesis tasks. Stochastic search algorithms start with a candidate solution and use mutation operators to change that candidate into other candidates that might be closer to a solution. Enumerative algorithms systematically evaluate programs in the space defined by the DSL. 
We focus on enumerative algorithms since \textsc{Bee Search} is an enumerative method.

\subsection{Enumerative Methods}

Enumeration-based search has proven to be an effective approach and is used in many synthesizers ~\shortcite{bustle,probe,Woosuk2018,AlbarghouthiGK13,Udupa:2013}, including winners of SyGuS competitions~\shortcite{2016:syguswinner,Alur2017ScalingEP}. 
Enumerative methods can be classified into two categories: bottom-up and top-down. Bottom-up search (\bus{}) algorithms start with the shortest possible programs and use the rules of the symbolic language to generate longer programs by combining the shorter ones. BUS is an attractive search strategy because the programs generated are complete and thus can be executed, allowing observational equivalence checks to be performed~\cite{AlbarghouthiGK13,Udupa:2013,Woosuk2018,bustle,probe}. %
Top-down search algorithms start with a high-level structure of the program and enumerate the low-level structures. Top-down enumeration can only utilize weaker forms of equivalence~\cite{Woosuk2018,topdownEquivalence2017} since most of the programs generated in the search are incomplete and cannot be executed.  

\subsection{Guided Enumerative Search}

In guided enumerative search, instead of enumerating programs according to their AST size, the algorithms prioritize programs according to a function. One of the first guided search methods for program synthesis, \textsc{DeepCoder}~\cite{BalogGBNT16}, uses a top-down search. It uses a learned model to define a probability distribution over symbols in the language. Then, it performs a depth-first search that first explores the branches with a higher probability according to the model. \textsc{Euphony} also uses a probability distribution over production symbols of the underlying context-free grammar defining the programming language to guide a top-down search~\shortcite{Woosuk2018}, however, \textsc{Euphony}'s model considers the context in which a production rule is to be applied using the idea of probabilistic higher-order grammar. 
While different variations of using a learned model to guide top-down search algorithms have been introduced in the past ~\cite{chen2018executionguided,nips2018longprogramssynthesis,bunel2018leveraging,Devlin2017RobustFillNP,topdownEquivalence2017}, empirical evidence shows that they fail to outperform guided bottom-up search techniques~\cite{probe}. 

\textsc{TF-Coder}~\cite{tfcoder:2020} was the first system to utilize a function to guide a \bus\ algorithm. \textsc{TF-Coder} requires one to manually assign weight values to the operations based on their usage and complexity. During the search, \textsc{TF-Coder} prefers to combine programs with lower weights than programs with larger weights, thus biasing the search; the weight of a program is defined as the sum of the weights of the production rules used to generate the program. Since \textsc{TF-Coder} requires one to manually set weights for each operation, we did not consider it in our experiments. Similarly to \bustle\ and \probe, \textsc{TF-Coder} also suffers from loss of information, since it considers only integer cost values. 

\section{Conclusions}\label{section:conclusions}

In this paper, we showed that some of the current guided \bus\ algorithms suffer from a common problem: they can lose useful information given by the cost function because they only consider integer-valued costs. As a result, these algorithms do not perform best-first search with respect to the cost function used in the search. \heapsearch{} is a best-first guided bottom-up search algorithm that provably does not lose information from the cost function. However, \heapsearch{} sacrifices a key feature of \bus\ algorithms, which is the ability to eliminate observational equivalent programs. We presented an algorithm loosely inspired by the system \brute, from the ILP literature, to program synthesis, which we also referred to as \brute. \brute{} is able to perform search in best-first order and eliminate observational equivalent programs. However, \brute's search is best-first with respect to the evaluation of programs. As a result, many programs are generated but never evaluated, as they are more expensive than the solution program.   
We introduced \bee, a novel guided \bus\ algorithm that is guaranteed to perform search in best-first order when employing additive pre-generation cost functions and penalizing additive post-generation functions. In addition to performing search in best-first order, \bee\ is able to eliminate observational equivalent programs and its best-first search is with respect to the generation of programs. That is, \bee\ does not generate programs that are more expensive than the solution program. We also introduced a cost function that uses the neural model of \bustle. The difference between our function and \bustle's is that the former does not bound the penalty applied in the post-generation evaluation, as \bustle's cost function does. Empirical results on string manipulation and bit-vector problems showed that \bee\ was never worse than \probe\ and \bustle\ and can substantially outperform them, especially in larger program spaces. \bee\ outperformed \heapsearch{} and \brute\ by a large margin in both domains. Empirical results also showed that \bee\ with our cost function was the best performing system in the string manipulation domain. 

\section*{Acknowledgements}

We thank Thirupathi Reddy Emireddy for implementing the \bustle\ algorithm used in our experiments and the anonymous reviewers for their constructive feedback. This research was supported by Canada's NSERC and the CIFAR AI Chairs program. This research was enabled in part by support provided by the Digital Research Alliance of Canada.

\appendix

\section{Domain Specific Languages (DSLs)} \label{appendix:DSL}

\subsection{String Domain DSL}
\begin{figure}[H]
    $Start \rightarrow S \;|\; I \;|\; B $ \\
    $ S \rightarrow {\tt replace}(S,S,S) \;|\; {\tt concat}(S,S) \;|\; {\tt substr}(S, I, I)$\\
    \hspace*{12pt} $\; |\;{\tt ite}(B,S,S) \;|\; {\tt intToStr}(I) \;|\; {\tt charAt}(S, I)$\\
    \hspace*{12pt} $\; |\;{\tt toLower}(S) \;|\; {\tt toUpper}(S) \;|\; {\tt arg}0 \;|\; {\tt arg}1 \;|\; \ldots$\\
    \hspace*{12pt} $\; |\;{\tt lit-}0 \;|\; {\tt lit-}1 \;|\; \ldots$\\
    $ I \rightarrow {\tt strToInt}(S) \;|\; {\tt add}(I, I) \;|\; {\tt sub}(I, I) \;|\; {\tt mul}(I, I) \\$
    \hspace*{12pt} $\; |\;{\tt mod}(I, I) \;|\; {\tt length}(S) \;|\; {\tt indexOf}(S,S,I)$\\
    \hspace*{12pt} $\; |\;{\tt ite}(B, I, I) \;|\; {\tt find}(S,S) \;|\; {\tt arg}0 \;|\; {\tt arg}1 \;|\; \ldots$\\
    \hspace*{12pt} $\; |\;{\tt lit-}0 \;|\; {\tt lit-}1 \;|\; \ldots$\\
    $ B \rightarrow {\tt true} \;|\; {\tt false} \;|\; {\tt isEqual}(I,I) \;|\; {\tt isLess}(I,I)$\\
    \hspace*{12pt} $\; |\;{\tt isGreater}(I,I) \;|\; {\tt contains}(S,S)$\\
    \hspace*{12pt} $\; |\;{\tt isSuffixOf}(S,S) \;|\; {\tt isPrefixOf}(S,S)$\\
    \caption{Baseline DSL for string domain used in this paper}
    \label{fig:dsl-string}
\end{figure}

\subsection{BitVector Domain DSL}

\begin{figure}[H]
    $Start \rightarrow BV \;|\; B $ \\
    $ BV \rightarrow {\tt xor}(BV, BV) \;|\; {\tt and}(BV, BV) \;|\; {\tt or}(BV, BV)$\\
    \hspace*{12pt} $\; |\;{\tt neg}(BV) \;|\; {\tt not}(BV) \;|\; {\tt add}(BV, BV) \;|\; {\tt mul}(BV, BV)$\\
    \hspace*{12pt} $\; |\;{\tt udiv}(BV, BV) \;|\; {\tt urem}(BV, BV) \;|\; {\tt lshr}(BV, BV) $\\
    \hspace*{12pt} $\; |\;{\tt ashr}(BV, BV) \;|\; {\tt shl}(BV, BV) \;|\; {\tt sdiv}(BV, BV)$\\
    \hspace*{12pt} $\; |\;{\tt srem}(BV, BV) \;|\; {\tt sub}(BV, BV) \;|\; {\tt ite}(B, BV, BV)$\\
    \hspace*{12pt} $\; |\;{\tt arg}0 \;|\; {\tt arg}1 \;|\; \ldots \;|\; {\tt lit-}0 \;|\; {\tt lit-}1 \;|\; \ldots $\\
    $ B \rightarrow {\tt true} \;|\; {\tt false} \;|\; {\tt isEqual}(BV,BV) \;|\; {\tt ult}(BV,BV)$\\
    \hspace*{12pt} $\; |\;{\tt ule}(BV,BV) \;|\; {\tt slt}(BV,BV) \;|\; {\tt sle}(BV,BV)$\\
    \hspace*{12pt} $\; |\;{\tt ugt}(BV,BV) \;|\; {\tt redor}(BV) \;|\; {\tt and}(BV,BV) $\\
    \hspace*{12pt} $\; |\;{\tt or}(BV,BV) \;|\; {\tt not}(BV) \;|\; {\tt uge}(BV, BV)$\\
    \hspace*{12pt} $\; |\;{\tt sge}(BV,BV) \;|\; {\tt sgt}(BV, BV)$\\
    \caption{Baseline DSL for bit-vector domain used in this paper}
    \label{fig:dsl-bitvector} 
\end{figure}

\vskip 0.2in
\bibliography{jair}
\bibliographystyle{theapa}

\end{document}